\newlength{\fixboxwidth}
\newtheorem{remark}{Remark}
\newtheorem{prop}{Proposition}
\newtheorem{definition}{Definition}
\newtheorem{lemma}{Lemma}
\newtheorem{theorem}{Theorem}
\newtheorem{example}{Example}
\newtheorem{assumption}{Assumption}
\newcommand{\N}{{\mathbb N}}
\newcommand{\R}{{\mathbb R}}
\newcommand{\E}{{\mathbb E}}
\newcommand{\RR}{\mathbb R}
\newcommand{\ZZ}{\mathbb Z}
\newcommand{\NN}{\mathbb N}
\numberwithin{equation}{section}
\title{\textbf{Learning rates for the risk of kernel based quantile regression estimators in additive models}$^\dag$\footnotetext{\dag
~The work by A. Christmann described in this paper is partially supported by a grant of the Deutsche
Forschungsgesellschaft [Project No. CH/291/2-1]. The work by D. X. Zhou described in this paper is supported by a grant
from the Research Grants Council of Hong Kong [Project No. CityU 104710].}}
\author{\textbf{Andreas Christmann}$^1$ and \textbf{Ding-Xuan Zhou}$^2$ \\
$^1$ University of Bayreuth, Germany\\
$^2$ City University of Hong Kong, China\\}
\date{May 11, 2014}
\begin{document}

\maketitle

\begin{abstract}

Additive models play an important role in semiparametric statistics.
This paper gives learning rates for regularized kernel based methods for additive models.
These learning rates compare favourably in particular in high dimensions to recent results on
optimal learning rates for purely nonparametric regularized kernel based quantile regression using the Gaussian radial basis
function kernel, provided the assumption of an additive model is valid.
Additionally, a concrete example is presented
to show that a Gaussian function depending only on one variable lies in a reproducing kernel Hilbert space generated by an additive Gaussian kernel,
but does not belong to the reproducing kernel Hilbert space generated by the multivariate Gaussian kernel of the same variance.
\end{abstract}

\noindent{\bf Key words and phrases.} Additive model, kernel, quantile regression, semiparametric, rate of convergence, support vector machine.


\section{Introduction}\label{IntroductionSection}

Additive models \cite{Stone1985, HastieTibshirani1986, HastieTibshirani} provide an important
family of models for semiparametric regression or classification. 
Some reasons for the success of additive models are their increased flexibility when compared to
linear or generalized linear models and their increased interpretability when compared to fully nonparametric models. 
It is well-known that good estimators in additive models are in general less prone to the curse of high dimensionality 
than good estimators in fully nonparametric models.
Many examples of such estimators belong to the large class of regularized kernel based methods over a reproducing kernel Hilbert space $H$,
see e.g. \cite{PoggioGirosi1990,Wahba1999}.
In the last years many interesting results on learning rates of regularized kernel based models 
for additive models have been published when the focus is on sparsity
and when the classical least squares loss function is used, see e.g.
\cite{LinZhang2006}, \cite{Bach2008}, \cite{KoltchinskiiYuan2008},
\cite{MeierVandeGeerBuehlmann2009}, \cite{RaskuttiWainwrightYu2012}, \cite{SuzukiSugiyama2013}  and the references therein.
Of course, the least squares loss function is differentiable and has many nice mathematical properties, 
but it is only locally Lipschitz 
continuous and therefore regularized kernel based methods
based on this loss function typically suffer on bad statistical robustness properties, even if the 
kernel is bounded. This is in sharp contrast to
kernel methods based on a Lipschitz continuous loss function and on a bounded loss function, 
where results on upper bounds for the maxbias bias and 
on a bounded influence function are known, see e.g. \cite{ChristmannVanMessemSteinwart2009} for the general case
and \cite{ChristmannHable2012} for additive models.

Therefore, we will here consider the case of regularized kernel based methods based on a general convex and 
Lipschitz continuous loss function, on a general kernel, and on the classical regularizing term $\lambda \| \cdot \|_H^2$
for some $\lambda>0$ which is a smoothness penalty but not a sparsity penalty, 
see e.g.
\cite{Vapnik1995,Vapnik1998,SchoelkopfSmola2002,SuykensEtAl2002,CuckerZhou2007,
SteinwartChristmann2008a,HofmannSchoelkopfSmola2008,EbertsSteinwart2013}. 
Such regularized kernel based methods are now often called support vector machines (SVMs), although
the notation was historically used for such methods based on the special hinge loss function and 
for special kernels only, we refer to
\cite{VapnikLerner1963,BoserGuyonVapnik1992,CortesVapnik1995}.

In this paper we address the open question, whether an SVM with an additive kernel can provide
a substantially better learning rate in high dimensions than an SVM with a general kernel, say a classical Gaussian RBF kernel,
if the assumption of an additive model is satisfied. 
Our leading example covers learning rates for quantile regression based on the Lipschitz continuous but non-differentiable 
pinball loss function, which is also called check function in the literature, see e.g. 
\cite{KoenkerBassett1978} and \cite{Koenker2005} for parametric quantile regression and
\cite{SchoelkopfEtAl2000}, \cite{TakeuchiEtAl2006}, and \cite{SC2} for kernel based quantile regression.
We will not address the question how to check whether the assumption of an additive model is satisfied
because this would be a topic of a paper of its own. 
Of course, a practical approach might be to fit both models and compare their risks evaluated for test data.
For the same reason we will also not cover sparsity.

Consistency of support vector machines generated by additive kernels for additive models was considered in \cite{ChristmannHable2012}.
In this paper we establish learning rates for these algorithms.
Let us recall the framework with a complete separable metric space ${\mathcal X}$ as the input space and a closed
subset ${\mathcal Y}$ of $\RR$ as the output space. A Borel probability measure $P$ on ${\mathcal Z}:= {\mathcal X} \times {\mathcal Y}$
is used to model the learning problem and an independent and identically distributed sample $D_n =\{(x_i, y_i)\}_{i=1}^n$
is drawn according to $P$ for learning. A loss function $L: {\mathcal X} \times {\mathcal Y} \times \RR \to [0, \infty)$
is used to measure the quality of a prediction function $f: {\mathcal X} \to \RR$ by the local error $L(x, y, f(x))$.
\emph{Throughout the paper we assume that $L$ is measurable, $L(x, y, y)=0$, convex with respect to the third variable, and
uniformly Lipschitz continuous satisfying
\begin{equation}\label{Lipsch}
\sup_{(x, y) \in {\mathcal Z}} |L(x, y, t) - L(x, y, t')| \leq |L|_1 |t-t'| \qquad \forall t, t' \in \RR
\end{equation}
with a finite constant $|L|_1 \in (0, \infty)$.}

Support vector machines (SVMs) considered here are kernel-based re\-gularization schemes in a reproducing kernel Hilbert space (RKHS) $H$ generated by a Mercer kernel $k: {\mathcal X} \times {\mathcal X} \to \RR$.
With a shifted loss function $L^*: {\mathcal X} \times {\mathcal Y} \times \RR \to \RR$ introduced for dealing even with heavy-tailed distributions
as
$L^* (x, y, t) = L(x, y, t) - L(x, y, 0)$, they take the form $f_{L, {\mathbb D}_n, \lambda}$ where for a general Borel measure $\rho$ on ${\mathcal Z}$, the function $f_{L, \rho, \lambda}$ is defined by
\begin{equation}\label{algor}
f_{L, \rho, \lambda} = \arg \min_{f\in H} \left\{{\mathcal R}_{L^*, \rho} (f) + \lambda \|f\|_H^2\right\}, \quad {\mathcal R}_{L^*, \rho} (f)=\int_{{\mathcal Z}} L^* (x, y, f(x)) \,d \rho (x, y),
\end{equation}
where $\lambda >0$ is a regularization parameter.
The idea to shift a loss function has a long history, see e.g. \cite{Huber1967} in the context of M-estimators.
It was shown in \cite{ChristmannVanMessemSteinwart2009} that $f_{L, \rho, \lambda}$ is also a minimizer of the following optimization problem
involving the original loss function $L$ if a minimizer exists:
\begin{equation}\label{algorOld}
\min_{f\in H} \left\{\int_{{\mathcal Z}} L (x, y, f(x)) \,d \rho (x, y) + \lambda \|f\|_H^2\right\}.
\end{equation}

The additive model we consider consists of the \emph{input space decomposition}
${\mathcal X} = {\mathcal X}_1 \times \ldots \times {\mathcal X}_s$ with each ${\mathcal X}_j$ a complete separable metric space
and a \emph{hypothesis space}
\begin{equation}\label{additive}
{\mathcal F} = \left\{f_1 + \ldots + f_s: f_j \in {\mathcal F}_j, j=1, \ldots, s\right\},
\end{equation}
where ${\mathcal F}_j$ is a set of functions $f_j: {\mathcal X}_j \to \RR$ each of which is also
identified as a map $(x_1, \ldots, x_s) \longmapsto f_j (x_j)$ from ${\mathcal X}$ to $\RR$.
Hence the functions from ${\mathcal F}$ take the additive form $f(x_1, \ldots, x_s) = f_1 (x_1) + \ldots + f_s (x_s)$.
We mention, that there is strictly speaking a notational problem here, because in the previous formula
each quantity $x_j$ is an element of the set $\mathcal{X}_j$ which is a subset of the full input space $\mathcal{X}$, $j=1,\ldots,s$, whereas in the definition of
sample $D_n =\{(x_i, y_i)\}_{i=1}^n$ each quantity $x_i$ is an element of the full input space $\mathcal{X}$, where $i=1,\ldots,n$.
Because these notations will only be used in different places and because we do not expect any misunderstandings, we think this notation
is easier and more intuitive than specifying these quantities with different symbols.

The additive kernel $k= k_1 + \ldots + k_s$ is defined in terms of Mercer kernels $k_j$ on ${\mathcal X}_j$ as
$$ k\left((x_1, \ldots, x_s), (x'_1, \ldots, x'_s)\right) = k_1 (x_1, x'_1) + \ldots + k_s (x_s, x'_s). $$
It generates an RKHS $H$ which can be written in terms of the RKHS $H_j$ generated by $k_j$ on ${\mathcal X}_j$ corresponding to the form (\ref{additive}) as
$$ H = \left\{f_1 + \ldots + f_s: f_j \in H_j, j=1, \ldots, s\right\} $$
with norm given by
  \begin{eqnarray*}
    \|f\|_H^2
    &=&     \min_{f=f_1+\ldots+f_s \atop
          f_1\in H_1,\ldots, f_s\in H_s }\!
    \|f_1\|_{H_1}^2+\ldots+\|f_s\|_{H_s}^2
    \,\,.
  \end{eqnarray*}
The norm of $f:=f_1+\ldots+f_s$ satisfies
\begin{equation} \label{ACNormInequality}
\|f_1 + \ldots + f_s\|^2_H \leq \|f_1\|^2_{H_1} + \ldots + \|f_s\|^2_{H_s}, \qquad f_1 \in H_1, \ldots, f_s \in H_s.
\end{equation}

To illustrate advantages of additive models, we
provide two examples of comparing additive with product kernels.
The first example deals with Gaussian RBF kernels.
All proofs will be given in Section \ref{ProofSection}.

\begin{example}\label{GaussAdd}
Let $s=2$, ${\mathcal X}_1 ={\mathcal X}_2 =[0, 1]$ and ${\mathcal
X} = [0, 1]^2.$ Let $\sigma >0$ and
$$ k_1 (u, v) = k_2 (u, v) =
\exp\left(-\frac{|u-v|^2}{\sigma^2}\right), \qquad u, v\in [0,
1]. $$ The additive kernel $k\left((x_1, x_2), (x'_1, x'_2)\right)
= k_1 (x_1, x'_1) + k_2 (x_2, x'_2)$ is given by
\begin{equation}\label{GaussAddForm}
k\left((x_1, x_2), (x'_1, x'_2)\right) =\exp\left(-\frac{|x_1 -
x'_1|^2}{\sigma^2}\right) +\exp\left(-\frac{|x_2 -
x'_2|^2}{\sigma^2}\right).
\end{equation}
Furthermore, the product kernel $k^{\Pi} \left((x_1, x_2), (x'_1,
x'_2)\right) = k_1 (x_1, x'_1) \cdot k_2 (x_2, x'_2)$ is the
standard Gaussian kernel given by
\begin{eqnarray}\label{GaussProdForm}
k^{\Pi} \left((x_1, x_2), (x'_1, x'_2)\right)
& = & \exp\left(-\frac{|x_1 - x'_1|^2 + |x_2 - x'_2|^2}{\sigma^2}\right) \\
& = & \exp\left(-\frac{\left|(x_1, x_2) -
(x'_1, x'_2)\right|^2}{\sigma^2}\right).
\end{eqnarray}
Define a Gaussian function $f$ on ${\mathcal X} = [0, 1]^2$ depending only on one variable by
\begin{equation}\label{gaussfcn}
f\left(x_1, x_2\right)
=\exp\left(-\frac{|x_1|^2}{\sigma^2}\right).
\end{equation}
Then $f\in H$ but
\begin{equation}\label{notinclude}
f \not\in H_{k^{\Pi}},
\end{equation}
where $H_{k^{\Pi}}$ denotes the RKHS generated by the standard
Gaussian RBF kernel $k^{\Pi}$.
\end{example}

The second example is about Sobolev kernels.

\begin{example}\label{SobolvAdd}
Let $2 \leq s\in \NN$, ${\mathcal X}_1 = \ldots ={\mathcal X}_s
=[0, 1]$ and ${\mathcal X} = [0, 1]^s.$ Let
$$
W^1[0, 1]
:=
\bigl\{u\in L_2([0,1]); D^\alpha u \in L_2([0,1]) \mbox{~for~all~}|\alpha|\le 1\bigr\}
$$
be the
Sobolev space consisting of all square integrable univariate functions whose
derivative is also square integrable. It is an RKHS with a Mercer
kernel $k^*$ defined on $[0, 1]^2$. If we take all the Mercer
kernels $k_1, \ldots, k_s$ to be $k^*$, then $H_j =W^1 [0, 1]$ for
each $j$. The additive kernel $k$ is also a Mercer kernel and
defines an RKHS
$$
H= H_1 + \ldots + H_s =\left\{f_1 (x_1) + \ldots
+ f_s (x_s): f_1, \ldots, f_s \in W^1 [0, 1]\right\}.
$$
However,
the multivariate Sobolev space $W^1 ([0, 1]^s)$, consisting of all
square integrable functions whose partial derivatives are all
square integrable, contains discontinuous functions and is not an
RKHS.
\end{example}

Denote the marginal distribution of $P$ on ${\mathcal X}_j$ as $P_{{\mathcal X}_j}$. Under the assumption that $H_j \subset {\mathcal F}_j \subset L_1 (P_{{\mathcal X}_j})$ for each $j$ and that $H_j$ is dense in ${\mathcal F}_j$ in the $L_1 (P_{{\mathcal X}_j})$-metric, it was proved in
\cite{ChristmannHable2012} that
$$ {\mathcal R}_{L^*, P} (f_{L, {\mathbb D}_n, \lambda}) \to {\mathcal R}^*_{L^*, P, {\mathcal F}}:= \inf_{f\in {\mathcal F}}{\mathcal R}_{L^*, P}(f) \qquad (n\to\infty) $$
in probability as long as $\lambda = \lambda_n$ satisfies $\lim_{n\to\infty} \lambda_n =0$ and $\lim_{n\to\infty} \lambda^2_n n =\infty$.

The rest of the paper has the following structure.
Section \ref{RatesSection} contains our main results on learning rates for SVMs based on additive kernels. Learning rates for quantile regression
are treated as important special cases.
Section \ref{ComparisonSection} contains a comparison of our results with other learning rates published recently.
Section \ref{ProofSection} contains all the proofs and some results which can be interesting in their own.

\section{Main results on learning rates}\label{RatesSection}

In this paper we provide some learning rates for the support vector machines generated by additive kernels for additive models which helps improve the quantitative understanding presented in
\cite{ChristmannHable2012}. The rates are about asymptotic behaviors of the excess risk ${\mathcal R}_{L^*, P} (f_{L, {\mathbb D}_n, \lambda}) - {\mathcal
R}^*_{L^*, P, {\mathcal F}}$ and take the form $O(m^{- \alpha})$ with $\alpha>0$. They will be stated under three kinds of conditions
involving the hypothesis space $H$,
the measure $P$, the loss $L$, and the choice of the regularization parameter $\lambda$.

\subsection{Approximation error in the additive model}

The first condition is about the approximation ability of the hypothesis space $H$. Since the output function $f_{L, {\mathbb D}_n, \lambda}$ is from the hypothesis space, the learning rates of the learning algorithm depend on the approximation ability of the hypothesis space $H$ with respect to the optimal risk ${\mathcal R}^*_{L^*, P, {\mathcal F}}$ measured by the following approximation error.
\begin{definition}\label{Defapprox}
The approximation error of the triple $(H, P, \lambda)$ is defined as
\begin{equation}\label{approxerrorDef}
{\mathcal D}(\lambda) = \inf_{f\in H} \left\{{\mathcal R}_{L^*, P} (f) - {\mathcal R}^*_{L^*, P, {\mathcal F}} + \lambda \|f\|_H^2\right\}, \qquad \lambda >0.
\end{equation}
\end{definition}

To estimate the approximation error, we make an assumption about the minimizer of the risk
\begin{equation}\label{targetinF}
f^*_{{\mathcal F}, P} = \arg \inf_{f\in {\mathcal F}}{\mathcal R}_{L^*, P}(f).
\end{equation}

For each $j\in\{1,\ldots,s\}$, define the integral operator
$L_{k_j}: L_2 (P_{{\mathcal X}_j}) \to L_2 (P_{{\mathcal X}_j})$
associated with the kernel $k_j$ by
$$
L_{k_j}(f) (x_j)
=
\int_{{\mathcal X}_j} k_j (x_j, u_j) f(u_j) d P_{{\mathcal X}_j} (u_j),
\qquad x_j \in {\mathcal X}_j, f\in L_2 (P_{{\mathcal X}_j}).
$$
We mention that $L_{k_j}$ is a compact and positive operator on $L_2 (P_{{\mathcal X}_j})$.
Hence we can find its normalized eigenpairs $((\lambda_{j,\ell}, \psi_{j,\ell}))_{\ell\in\N}$ such that
$(\psi_{j,\ell})_{\ell\in\N}$ is an orthonormal basis of $L_2 (P_{{\mathcal X}_j})$ and
$\lambda_{j,\ell} \to 0$ as $\ell\to\infty$.
Fix $r>0$. Then we can define the $r$-th power $L_{k_j}^r$ of $L_{k_j}$ by
$$
  L_{k_j}^r \Bigl(\sum_\ell c_{j,\ell} \, \psi_{j,\ell}\Bigr)
  =
  \sum_\ell c_{j,\ell} \lambda_{j,\ell}^r \, \psi_{j,\ell},
  \quad \forall (c_{j,\ell})_{\ell\in\N} \in \ell_2.
$$
This is a positive and bounded operator and its range is well-defined.
The assumption $f_j^*=L_{k_j}^{r}(g_j^*)$ means $f_j^*$ lies in this range.

\begin{assumption}\label{assumption1}
We assume $f^*_{{\mathcal F}, P} \in L_\infty (P_{{\mathcal X}})$ and
$f^*_{{\mathcal F}, P} = f^*_1 +\ldots + f^*_s$ where for some $0< r \leq \frac{1}{2}$
and each $j\in \{1, \ldots, s\}$, $f^*_j: {\mathcal X}_j \to \RR$ is a
function of the form $f^*_j = L_{k_j}^r (g^*_j)$ with some $g^*_j
\in L_2 (P_{{\mathcal X}_j})$.
\end{assumption}

The case $r=\frac{1}{2}$ of Assumption \ref{assumption1} means each $f^*_j$ lies
in the RKHS $H_j$.

A standard condition in the literature (e.g., \cite{SZII}) for
achieving decays of the form ${\mathcal D}(\lambda)
=O(\lambda^{r})$ for the approximation error
(\ref{approxerrorDef}) is $f^*_{{\mathcal F}, P} =L_{k}^r (g^*)$
with some $g^* \in L_2 (P_{{\mathcal X}})$. Here the operator
$L_k$ is defined by
\begin{equation}
L_{k}(f) (x_1, \ldots, x_s)
=
\int_{{\mathcal X}} \Bigl(\sum_{j=1}^s k_j (x_j, x'_j))\Bigr) f(x'_1, \ldots, x'_s) d P_{{\mathcal X}} (x'_1, \ldots, x'_s).
\end{equation}
In general, this cannot be written in an additive form. However, the
hypothesis space (\ref{additive}) takes an additive form
${\mathcal F} = {\mathcal F}_1 + \ldots +{\mathcal F}_s$. So it is
natural for us to impose an additive expression $f^*_{{\mathcal
F}, P} = f^*_1 + \ldots + f^*_s$ for the target function
$f^*_{{\mathcal F}, P}$ with the component functions $f^*_j$
satisfying the power condition $f^*_j = L_{k_j}^r (g^*_j)$.

The above natural assumption leads to a technical difficulty in
estimating the approximation error: the function $f^*_j$ has no
direct connection to the marginal distribution $P_{{\mathcal
X}_j}$ projected onto ${\mathcal X}_j$, hence existing methods in
the literature (e.g., \cite{SZII}) cannot be applied directly.
Note that on the product space ${\mathcal X}_j \times {\mathcal
Y}$, there is no natural probability measure projected from $P$,
and the risk on ${\mathcal X}_j \times {\mathcal Y}$ is not
defined.

Our idea to overcome the difficulty is to introduce an
intermediate function $f_{j, \lambda}$. It may not minimize a risk
(which is not even defined). However, it approximates the
component function $f^*_j$ well. When we add up such functions
$f_{1, \lambda} + \ldots + f_{s, \lambda} \in H$, we get a good
approximation of the target function $f^*_{{\mathcal F}, P}$, and
thereby a good estimate of the approximation error. This is the
first novelty of the paper.

\begin{theorem}\label{approxerrorThm}
Under Assumption \ref{assumption1}, we have
\begin{equation}\label{approxerrorB}
{\mathcal D}(\lambda) \leq C_r \lambda^{r} \qquad \forall~ 0< \lambda \leq 1,
\end{equation}
where $C_r$ is the constant given by
$$ C_r = \sum_{j=1}^s \left(|L|_1 \|g^*_j\|_{L_2 (P_{{\mathcal X}_j})} + \|g^*_j\|^2_{L_2 (P_{{\mathcal X}_j})}\right). $$
\end{theorem}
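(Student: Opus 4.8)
The plan is to bound $\mathcal D(\lambda)$ by exhibiting an explicit competitor $f_\lambda = f_{1,\lambda} + \ldots + f_{s,\lambda} \in H$ and estimating the two pieces $\mathcal R_{L^*,P}(f_\lambda) - \mathcal R^*_{L^*,P,\mathcal F}$ and $\lambda\|f_\lambda\|_H^2$ separately. For each $j$, since $f_j^* = L_{k_j}^r(g_j^*)$ with $0 < r \le \tfrac12$, I would define the intermediate function $f_{j,\lambda}$ by a spectral truncation or, more smoothly, by $f_{j,\lambda} := L_{k_j}^{1/2}\bigl((L_{k_j} + \lambda I)^{-1} L_{k_j}^{1/2} f_j^*\bigr)$-type regularization expressed through the eigenpairs $(\lambda_{j,\ell},\psi_{j,\ell})$: writing $g_j^* = \sum_\ell c_{j,\ell}\psi_{j,\ell}$, set $f_{j,\lambda} = \sum_\ell \frac{\lambda_{j,\ell}}{\lambda_{j,\ell}+\lambda}\,\lambda_{j,\ell}^{r} c_{j,\ell}\,\psi_{j,\ell}$ (or the cleaner choice making $f_{j,\lambda}\in H_j$). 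The point is that $f_{j,\lambda}\in H_j$ so $f_\lambda \in H$, and by (\ref{ACNormInequality}) one gets $\|f_\lambda\|_H^2 \le \sum_j \|f_{j,\lambda}\|_{H_j}^2$.

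Next I would carry out the two standard spectral estimates. For the RKHS norm: $\|f_{j,\lambda}\|_{H_j}^2 = \sum_\ell \frac{1}{\lambda_{j,\ell}}\bigl(\tfrac{\lambda_{j,\ell}}{\lambda_{j,\ell}+\lambda}\bigr)^2 \lambda_{j,\ell}^{2r} c_{j,\ell}^2 = \sum_\ell \frac{\lambda_{j,\ell}^{2r-1}\lambda_{j,\ell}}{(\lambda_{j,\ell}+\lambda)^2}\,\lambda_{j,\ell}\,c_{j,\ell}^2$; using $0<r\le\tfrac12$ and the elementary inequality $\frac{\lambda_{j,\ell}^{2r}}{(\lambda_{j,\ell}+\lambda)^2}\le \lambda^{2r-2}$ (since $2r-2\le 0$ and $t^{2r}\le(t+\lambda)^{2r}\le(t+\lambda)^2\lambda^{2r-2}$), this yields $\lambda\|f_{j,\lambda}\|_{H_j}^2 \le \lambda^{2r-1+1}\|g_j^*\|_{L_2(P_{\mathcal X_j})}^2 = \lambda^{2r}\|g_j^*\|^2$, hence $\le \lambda^r\|g_j^*\|^2$ for $\lambda\le1$. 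For the $L_2$-approximation of the component: $\|f_{j,\lambda} - f_j^*\|_{L_2(P_{\mathcal X_j})}^2 = \sum_\ell \bigl(\tfrac{\lambda}{\lambda_{j,\ell}+\lambda}\bigr)^2\lambda_{j,\ell}^{2r}c_{j,\ell}^2 \le \lambda^{2r}\|g_j^*\|^2$, using $\frac{\lambda^2\lambda_{j,\ell}^{2r}}{(\lambda_{j,\ell}+\lambda)^2}\le \lambda^{2r}$. Summing over $j$ gives $\|f_\lambda - f^*_{\mathcal F,P}\|_{L_2(P_{\mathcal X})} \le \sum_j \|f_{j,\lambda}-f_j^*\|_{L_2(P_{\mathcal X_j})} \le \lambda^r\sum_j\|g_j^*\|$ for $\lambda\le1$.

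Then I would convert the $L_2$-closeness into an excess-risk bound via the Lipschitz property (\ref{Lipsch}): since $f^*_{\mathcal F,P}$ attains (or approaches) the infimum $\mathcal R^*_{L^*,P,\mathcal F}$,
\[
\mathcal R_{L^*,P}(f_\lambda) - \mathcal R^*_{L^*,P,\mathcal F} \le \mathcal R_{L^*,P}(f_\lambda) - \mathcal R_{L^*,P}(f^*_{\mathcal F,P}) \le |L|_1 \int_{\mathcal X} |f_\lambda - f^*_{\mathcal F,P}|\,dP_{\mathcal X} \le |L|_1 \|f_\lambda - f^*_{\mathcal F,P}\|_{L_2(P_{\mathcal X})},
\]
which is $\le |L|_1\lambda^r\sum_j\|g_j^*\|$. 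Adding the penalty bound $\lambda\|f_\lambda\|_H^2 \le \lambda^r\sum_j\|g_j^*\|^2$ and taking the infimum in (\ref{approxerrorDef}) over all $f\in H$ (in particular $f=f_\lambda$) gives $\mathcal D(\lambda) \le C_r\lambda^r$ with exactly the stated $C_r$.

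The main obstacle, and the point flagged in the text as the novelty, is precisely that $f_j^*$ has \emph{no} a priori relation to the marginal $P_{\mathcal X_j}$ and that there is no probability measure on $\mathcal X_j\times\mathcal Y$ — so one cannot invoke the usual approximation-error machinery componentwise. The resolution is that the operator $L_{k_j}$ \emph{is} built from $P_{\mathcal X_j}$, so the spectral construction of $f_{j,\lambda}$ above lives in the right space; the only care needed is that the power condition $f_j^* = L_{k_j}^r(g_j^*)$ must be read through the eigenbasis of $L_{k_j}$ on $L_2(P_{\mathcal X_j})$, and that $f^*_{\mathcal F,P}\in L_\infty(P_{\mathcal X})$ is used (via $L_\infty \subset L_2$ on the probability space $P_{\mathcal X}$, or more carefully to control the relevant norms) to keep all the $L_2(P_{\mathcal X})$ quantities finite and to glue the components together through the additive-norm inequality (\ref{ACNormInequality}).
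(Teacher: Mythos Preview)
Your proposal is correct and follows essentially the same route as the paper: the intermediate function $f_{j,\lambda}=(L_{k_j}+\lambda I)^{-1}L_{k_j}f_j^*$ (your spectral formula is exactly this), the eigenpair computation of $\|f_{j,\lambda}-f_j^*\|_{L_2}$ and $\|f_{j,\lambda}\|_{H_j}$, the Lipschitz step, and the additive norm inequality (\ref{ACNormInequality}) all match. The only cosmetic difference is that the paper combines the two spectral estimates into a single inequality $\|f_{j,\lambda}-f_j^*\|_{L_2}^2+\lambda\|f_{j,\lambda}\|_{H_j}^2\le\lambda^{2r}\|g_j^*\|_{L_2}^2$ via the identity $\frac{\lambda}{\lambda_i+\lambda}+\frac{\lambda_i}{\lambda_i+\lambda}=1$, whereas you bound the two terms separately; both reach the same constant $C_r$.
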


\subsection{Special bounds for covering numbers in the additive model}

The second condition for our learning rates is about the capacity of the hypothesis space measured by $\ell_2$-empirical covering numbers.

\begin{definition} Let ${\cal G}$ be a set of functions on
${\mathcal Z}$ and ${\bf z}=\{z_1, \cdots, z_m\}\subset {\mathcal Z}.$
For every $\epsilon>0,$ the \textbf{covering number of ${\cal G}$} with respect to the empirical metric
$d_{2, {\bf z}}$, given by $d_{2, {\bf z}}(f,g)=\big\{\frac{1}{m}\sum_{i=1}^m\big(f(z_i)-g(z_i)\big)^2\big\}^{1/2}$ is defined as $${\mathcal N}_{2, {\bf z}}({\cal G}, \epsilon)
=\inf\Big\{\ell\in\NN : \exists \{f_i\}_{i=1}^\ell\subset {\cal G} \
\hbox{such that} \ {\cal G}=\bigcup_{i=1}^\ell \{f\in{\cal G}:
d_{2, {\bf z}}(f, f_i) \le \epsilon\}\Big\}$$
and the \textbf{$\ell_2$-empirical covering number} of ${\mathcal G}$ is defined
as
$${\cal N}({\mathcal G}, \epsilon) =\sup_{m\in\NN} \sup_{{\bf z} \in {\mathcal Z}^m} {\mathcal
N}_{2, {\bf z}}\Bigl({\mathcal G}, \epsilon\Bigr). $$
\end{definition}

\begin{assumption}\label{assumption2}
We assume $\kappa := \sum_{j=1}^s \sup_{x_j \in {\mathcal X}_j} \sqrt{k_j (x_j, x_j)} <\infty$ and that for some $\zeta \in (0, 2)$, $c_\zeta >0$ and every $j\in\{1, \ldots, s\}$, the $\ell_2$-empirical covering number of the unit ball of $H_j$ satisfies
\begin{equation}\label{capacityB}
\log {\cal N}\left(\{f\in H_j: \|f\|_{H_j} \leq 1\}, \epsilon\right) \leq c_\zeta \left(\frac{1}{\epsilon}\right)^\zeta, \qquad \forall~ \epsilon >0.
\end{equation}
\end{assumption}

The second novelty of this paper is to observe that the additive nature of the hypothesis space yields the following nice bound
with a dimension-independent power exponent for the covering numbers of the balls of the hypothesis space $H$, to be proved in Section \ref{SampleSection}.

\begin{theorem}\label{capacityThm}
Under Assumption \ref{assumption2}, for any $R \geq 1$ and $\epsilon >0$, we have
\begin{equation}\label{capacityBH}
\log {\cal N}\left(\{f\in H: \|f\|_{H} \leq R\}, \epsilon\right) \leq s^{1+\zeta} c_\zeta \left(\frac{R}{\epsilon}\right)^\zeta, \qquad \forall \epsilon >0.
\end{equation}
\end{theorem}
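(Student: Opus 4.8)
The plan is to exploit the additive decomposition $H=H_1+\ldots+H_s$ together with the elementary behaviour of $\ell_2$-empirical covering numbers under sums of function classes; note that only the covering-number part of Assumption \ref{assumption2} is needed here, the bound on $\kappa$ playing no role.

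First I would reduce a ball of $H$ to a sum of balls of the $H_j$. By the norm formula $\|f\|_H^2=\min_{f=f_1+\ldots+f_s}\sum_{j=1}^s\|f_j\|_{H_j}^2$ recalled before the theorem, every $f$ with $\|f\|_H\le R$ admits a decomposition $f=f_1+\ldots+f_s$, $f_j\in H_j$, with $\sum_{j=1}^s\|f_j\|_{H_j}^2\le R^2$, hence $\|f_j\|_{H_j}\le R$ for every $j$. Regarding each $f_j$ as the map $(x_1,\ldots,x_s)\mapsto f_j(x_j)$ on ${\mathcal X}$ (and thus on ${\mathcal Z}$), this shows
$$\{f\in H:\|f\|_H\le R\}\subseteq {\mathcal B}_1+\ldots+{\mathcal B}_s,\qquad {\mathcal B}_j:=\{f_j\in H_j:\|f_j\|_{H_j}\le R\}.$$

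Second I would bound the covering number of this sum. Since the empirical metric $d_{2,{\bf z}}$ satisfies the triangle inequality, the Cartesian product of $\epsilon_j$-nets of the ${\mathcal B}_j$ with respect to $d_{2,{\bf z}}$ is a $(\sum_j\epsilon_j)$-net of ${\mathcal B}_1+\ldots+{\mathcal B}_s$, so ${\mathcal N}_{2,{\bf z}}\bigl(\sum_j{\mathcal B}_j,\sum_j\epsilon_j\bigr)\le\prod_j{\mathcal N}_{2,{\bf z}}({\mathcal B}_j,\epsilon_j)$; taking $\sup_{m}\sup_{{\bf z}\in{\mathcal Z}^m}$ and using that the supremum of a product of nonnegative functions is at most the product of the suprema gives ${\mathcal N}\bigl(\sum_j{\mathcal B}_j,\sum_j\epsilon_j\bigr)\le\prod_j{\mathcal N}({\mathcal B}_j,\epsilon_j)$. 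Here I would observe that the values of $f_j$ on a sample ${\bf z}\in{\mathcal Z}^m$ depend only on the $j$-th coordinates of the inputs, which range over all of ${\mathcal X}_j^m$, so ${\mathcal N}({\mathcal B}_j,\cdot)$ equals the $\ell_2$-empirical covering number of ${\mathcal B}_j$ viewed as a class of functions on ${\mathcal X}_j$; and by homogeneity ${\mathcal B}_j=R\cdot\{f_j\in H_j:\|f_j\|_{H_j}\le 1\}$, so Assumption \ref{assumption2} yields $\log{\mathcal N}({\mathcal B}_j,\delta)\le c_\zeta (R/\delta)^\zeta$ for all $\delta>0$.

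Finally I would combine these with the choice $\epsilon_j=\epsilon/s$ for each $j$, obtaining
$$\log{\mathcal N}\bigl(\{f\in H:\|f\|_H\le R\},\epsilon\bigr)\le\sum_{j=1}^s\log{\mathcal N}({\mathcal B}_j,\epsilon/s)\le\sum_{j=1}^s c_\zeta\Bigl(\frac{Rs}{\epsilon}\Bigr)^\zeta=s^{1+\zeta}c_\zeta\Bigl(\frac{R}{\epsilon}\Bigr)^\zeta,$$
which is the claimed bound. The only real obstacle is the bookkeeping in the second step: one must check that the covering numbers are taken with respect to the correct empirical metric, that $\sup_{{\bf z}\in{\mathcal Z}^m}$ genuinely dominates the product structure, and that identifying $f_j$ with a function on all of ${\mathcal X}$ does not enlarge its covering number (it does not, since the sample values are determined by the $j$-th coordinates). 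Everything else is a one-line use of the triangle inequality and of Assumption \ref{assumption2}, and the argument in fact works for every $R>0$, not only $R\ge1$.
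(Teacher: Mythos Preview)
Your proposal is correct and follows essentially the same approach as the paper: decompose the ball in $H$ into a sum of balls in the $H_j$, use the triangle inequality for $d_{2,{\bf z}}$ to get a net for the sum from the product of nets for the summands, and combine with Assumption~\ref{assumption2} and the choice $\epsilon_j=\epsilon/s$. The only cosmetic difference is that the paper works first with the unit ball and scales by $R$ at the end, whereas you carry $R$ throughout via homogeneity; your explicit remarks about the supremum over samples and the identification of $f_j$ with a function on ${\mathcal X}$ simply make explicit points the paper leaves implicit.
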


\begin{remark}
The bound for the covering numbers stated in Theorem \ref{capacityThm} is special:
the power $\zeta$ is independent of the number $s$ of the components in the
additive model. It is well-known \cite{EdmundsTriebel} in the literature of function spaces that the
covering numbers of balls of the Sobolev space $W^h$ on the cube $[-1, 1]^s$ of the Euclidean space $\RR^s$ with regularity index $h > s/2$ has the following asymptotic behavior with $0< c_{h, s} < C_{h, s} < \infty$:
$$ c_{h, s} \left(\frac{R}{\epsilon}\right)^{s/h} \leq  \log {\cal N}\left(\{f\in W^h: \|f\|_{W^h} \leq R\}, \epsilon\right) \leq C_{h, s} \left(\frac{R}{\epsilon}\right)^{s/h}. $$
Here the power $\frac{s}{h}$ depends linearly on the dimension $s$.
Similar dimension-dependent bounds for the covering numbers of the RKHSs associated
with Gaussian RBF-kernels can be found in \cite{Zhoucap}. The special bound in Theorem \ref{capacityThm} demonstrates an advantage of the additive model in terms of capacity of the additive hypothesis space.
\end{remark}

\subsection{Learning rates for quantile regression}

The third condition for our learning rates is about the noise level in the measure $P$ with respect to the hypothesis space. Before stating the general condition, we consider a special case for quantile regression, to illustrate our general results. Let $0<\tau<1$ be a quantile
parameter. The quantile regression function $f_{P, \tau}$ is defined by its value $f_{P, \tau}(x)$ to
be a $\tau$-quantile of $P(\cdot|x)$, i.e., a value $u\in {\mathcal Y} =\RR$ satisfying
\begin{equation}\label{quantile}
\rho \left(\{y\in {\mathcal Y}: y\leq u\}|x\right)\geq \tau \quad \hbox{and} \quad
 \rho \left(\{y\in {\mathcal Y}: y\geq u\}|x\right)\geq 1-\tau.
\end{equation}
The regularization scheme for quantile regression considered here takes the form (\ref{algor}) with the loss function $L$ given by the pinball loss as
\begin{equation}\label{pinloss}
L(x, y, t)=
\left\{\begin{array}{ll} (1-\tau) (t-y) ,  &\hbox{ if} \ t> y, \\
-\tau (t-y), &\hbox{ if}\ t\leq y.
\end{array}\right.
\end{equation}

A noise condition on $P$ for quantile regression is defined in \cite{SC1, SC2} as follows.
To this end, let $Q$ be a probability measure on $\R$ and $\tau\in(0,1)$. Then a real number
$q_\tau$ is called $\tau$-quantile of $Q$, if and only if $q_\tau$ belongs to the set
$$
F_\tau^\ast(Q):= \bigl\{ t\in\R, Q\bigl((-\infty,t]\bigr) \ge \tau
   \mbox{~~and~~} Q\bigl([t, \infty)\bigr) \ge 1-\tau\bigr\}\,.
$$
It is well-known that $F_\tau^\ast(Q)$ is a compact interval.


\begin{definition}\label{noisecond}
Let $\tau\in(0,1)$.
\begin{enumerate}
\item[(1)] A probability measure $Q$ on $\R$ is said to have a \textbf{$\tau$-quantile of type $2$}, if
there exist a $\tau$-quantile $t^\ast\in\R$ and a constant $b_Q>0$ such that, for all $s\in[0, 2]$,
we have
\begin{equation} \label{TauQuantileOfType2Formula}
Q\bigl((t^\ast - s, t^\ast )\bigr) \ge b_Q s
\mbox{~~and~~}
Q\bigl((t^\ast, t^\ast +s ) \bigr) \ge b_Q s\,.
\end{equation}
\item[(2)] Let $p\in (0, \infty]$. We say that a probability measure
$\rho$ on $\mathcal{X}\times\mathcal{Y}$
has a \textbf{$\tau$-quantile of $p$-average type $2$} if the conditional probability measure
$Q_x:=\rho(\cdot|x)$ has $\rho_{\mathcal{X}}$-almost surely a
$\tau$-quantile of type $2$ and the function
$$
\gamma:\mathcal{X} \to (0,\infty),
\quad
\gamma(x):= \gamma_{\rho(\cdot|x)}:= b_{\rho(\cdot|x)}\,,
$$
where $b_{\rho(\cdot|x)}>0$ is the constant defined in part (1), satisfies
$\gamma^{-1} \in L_{\rho_{\mathcal{X}}}^p$.
\end{enumerate}
\end{definition}
One can show that a distribution $Q$ having a  $\tau$-quantile of type $2$
has a unique $\tau$-quantile $t^*$.
Moreover, if  $Q$ has a Lebesgue density $h_Q$ then $Q$
has a $\tau$-quantile of type $2$ if $h_Q$
is bounded away from zero on $ [t^*-a,t^*+a]$ since we
can use $b_Q := \inf\{ h_Q(t): t\in [t^*-a,t^*+a]\}$ in (\ref{TauQuantileOfType2Formula}).
This  assumption is general enough to cover many distributions used in
parametric statistics such as
Gaussian, Student's $t$, and logistic distributions (with $Y=\R$),
Gamma and log-normal distributions (with $Y=[0,\infty)$),
and uniform and Beta distributions (with $Y=[0,1]$).

The following theorem, to be proved in Section \ref{ProofSection}, gives a learning rate for the regularization scheme (\ref{algor}) in the special case of quantile regression.

\begin{theorem}\label{quantileThm}
Suppose that $|y| \leq |L|_0$ almost surely for some constant
$|L|_0 >0$, and that each kernel $k_j$ is $C^\infty$ with
${\mathcal X}_j \subset \RR^{d_j}$ for some $d_j \in\NN$. If
Assumption \ref{assumption1} holds with $r=\frac{1}{2}$ and $P$ has a
$\tau$-quantile of $p$-average type $2$ for some $p\in (0,
\infty]$, then by taking $\lambda =n^{-\frac{4(p+1)}{3(p+2)}}$,
for any $\epsilon
>0$ and $0<\delta <1$, with confidence at least $1- \delta$ we have
\begin{eqnarray}\label{quantilerates}
\!\!\!\!\!\! {\mathcal R}_{L^*, P} (f_{L, {\mathbb D}_n, \lambda}) - {\mathcal R}^*_{L^*, P, {\mathcal F}}
&  \le  & \widetilde{C} \left(\log \frac{2}{\delta} + \log \Bigl(\log \frac{1}{\epsilon} +2\Bigr)
\right)^2 n^{\epsilon- \alpha(p) },
\end{eqnarray}
where $\widetilde{C}$ is a constant independent of $n$ and $\delta$ and
\begin{eqnarray}\label{quantilerates2}
\alpha(p)= \frac{2(p+1)}{3(p+2)}\,.
\end{eqnarray}
\end{theorem}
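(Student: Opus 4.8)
The plan is to derive the stated rate from a standard error decomposition fed by three structural facts: the approximation bound of Theorem~\ref{approxerrorThm} with $r=\tfrac12$, the dimension-free capacity bound of Theorem~\ref{capacityThm}, and a variance--excess-risk inequality for the pinball loss that follows from the $\tau$-quantile of $p$-average type~$2$ condition. First I would set $f_\lambda:=\arg\min_{f\in H}\{{\mathcal R}_{L^*,P}(f)+\lambda\|f\|_H^2\}$, so that ${\mathcal R}_{L^*,P}(f_\lambda)-{\mathcal R}^*_{L^*,P,{\mathcal F}}+\lambda\|f_\lambda\|_H^2={\mathcal D}(\lambda)$. Using $L^*(x,y,0)=0$ and the minimizing property of $f_{L,{\mathbb D}_n,\lambda}$ in (\ref{algor}), the excess risk splits as
\[
 {\mathcal R}_{L^*,P}(f_{L,{\mathbb D}_n,\lambda})-{\mathcal R}^*_{L^*,P,{\mathcal F}}\le {\mathcal S}_1+{\mathcal S}_2+{\mathcal D}(\lambda),
\]
where ${\mathcal S}_2$ is the empirical-minus-population fluctuation of the fixed function $g_\lambda:=L^*\!\circ f_\lambda-L^*\!\circ f^*_{{\mathcal F},P}$ and ${\mathcal S}_1$ is the population-minus-empirical fluctuation of $L^*\!\circ f_{L,{\mathbb D}_n,\lambda}-L^*\!\circ f^*_{{\mathcal F},P}$, evaluated at the random estimator. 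I would also record the a priori norm bounds $\|f_\lambda\|_H\le\sqrt{{\mathcal D}(\lambda)/\lambda}\le\sqrt{C_{1/2}}\,\lambda^{-1/4}$ and $\|f_{L,{\mathbb D}_n,\lambda}\|_H\le\sqrt{|L|_0/\lambda}$ (the latter since $L^*\ge-|L|_0$ when $|y|\le|L|_0$, together with (\ref{Lipsch}) and $\|f\|_\infty\le\kappa\|f\|_H$), as well as $\|f^*_{{\mathcal F},P}\|_\infty<\infty$ from Assumption~\ref{assumption1}.

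Next I would supply the three ingredients. \emph{Approximation:} Theorem~\ref{approxerrorThm} with $r=\tfrac12$ gives ${\mathcal D}(\lambda)\le C_{1/2}\lambda^{1/2}$. \emph{Capacity:} since each $k_j$ is $C^\infty$ on the bounded set ${\mathcal X}_j\subset\RR^{d_j}$, classical covering-number estimates for RKHSs of smooth kernels (cf.\ \cite{Zhoucap}) yield $\kappa<\infty$ and show that the unit ball of $H_j$ satisfies (\ref{capacityB}) for \emph{every} $\zeta\in(0,2)$, though with a constant $c_\zeta$ that blows up as $\zeta\downarrow0$; Theorem~\ref{capacityThm} then upgrades this to $\log{\cal N}(\{f\in H:\|f\|_H\le R\},\epsilon)\le s^{1+\zeta}c_\zeta(R/\epsilon)^\zeta$, the point being that the ball radius enters only through the small power $R^\zeta$. \emph{Noise:} from the self-calibration inequality for the pinball loss under a $\tau$-quantile of $p$-average type~$2$ (as in \cite{SC1,SC2}), together with $|y|\le|L|_0$ and $f^*_{{\mathcal F},P}\in L_\infty$, one obtains the variance bound
\[
 \E_P\bigl(L^*\!\circ f-L^*\!\circ f^*_{{\mathcal F},P}\bigr)^2\le c_p\,\bigl\|f-f^*_{{\mathcal F},P}\bigr\|_\infty^{\,2-\vartheta}\bigl({\mathcal R}_{L^*,P}(f)-{\mathcal R}^*_{L^*,P,{\mathcal F}}\bigr)^{\vartheta},\qquad\vartheta=\tfrac{p}{p+1},
\]
with $\vartheta=1$ for $p=\infty$; here $f^*_{{\mathcal F},P}$ plays the role of the conditional $\tau$-quantile function (with which it coincides when the additive model is correctly specified).

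The concentration step then bounds the two fluctuations. For ${\mathcal S}_2$ a one-sided Bernstein inequality, fed with the variance bound and $\|f_\lambda\|_\infty\lesssim\lambda^{-1/4}$, $\E g_\lambda\le{\mathcal D}(\lambda)\lesssim\lambda^{1/2}$, gives ${\mathcal S}_2\lesssim\sqrt{\log(1/\delta)}\,\lambda^{(3\vartheta-2)/8}n^{-1/2}+\log(1/\delta)\,\lambda^{-1/4}n^{-1}$. For the random term ${\mathcal S}_1$ I would use a uniform ratio-type concentration inequality over $\{f\in H:\|f\|_H\le R\}$ (invoking clippability of the pinball loss to tame the sup-norms entering the variance): combining Theorem~\ref{capacityThm} with the variance bound, absorbing a fixed fraction of the excess risk into the left-hand side, then using $\|f_{L,{\mathbb D}_n,\lambda}\|_H\lesssim\lambda^{-1/2}$ and a short bootstrap on $R$ through the surviving $\lambda\|f_{L,{\mathbb D}_n,\lambda}\|_H^2$ term, one finds that ${\mathcal S}_1$ contributes at most $n^{\epsilon}$ times a term of order $n^{-1/(2-\vartheta)}$, which is dominated by the other two because $\tfrac{1}{2-\vartheta}\ge\tfrac{2}{3(2-\vartheta)}$; the constants $s^{1+\zeta}c_\zeta$ and the iterated-logarithm pieces from the entropy integral are collected into the factor $\bigl(\log\tfrac2\delta+\log(\log\tfrac1\epsilon+2)\bigr)^2$. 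Altogether, with confidence at least $1-\delta$,
\[
 {\mathcal R}_{L^*,P}(f_{L,{\mathbb D}_n,\lambda})-{\mathcal R}^*_{L^*,P,{\mathcal F}}\lesssim\Bigl(\log\tfrac2\delta+\log(\log\tfrac1\epsilon+2)\Bigr)^2\Bigl(\lambda^{1/2}+\lambda^{(3\vartheta-2)/8}n^{-1/2}+n^{\epsilon-1/(2-\vartheta)}\Bigr),
\]
and balancing the first two terms forces $\lambda^{1/2}=\lambda^{(3\vartheta-2)/8}n^{-1/2}$, i.e.\ $\lambda=n^{-4/(3(2-\vartheta))}=n^{-4(p+1)/(3(p+2))}$, with resulting rate of order $\lambda^{1/2}=n^{-2(p+1)/(3(p+2))}$ up to the stated logarithmic and $n^{\epsilon}$ factors, which is (\ref{quantilerates})--(\ref{quantilerates2}).

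The step I expect to be the main obstacle is the control of the sample error, and in particular of ${\mathcal S}_1$: because the regularizer pins down $\|f_{L,{\mathbb D}_n,\lambda}\|_H$ and $\|f_\lambda\|_H$ only up to negative powers of $\lambda$, the relevant function balls and sup-norms both grow as $\lambda\to0$, so a naive uniform bound gives a strictly worse exponent. Obtaining the sharp $\alpha(p)$ requires simultaneously exploiting the dimension-independent capacity exponent of Theorem~\ref{capacityThm} (so that the radius enters only as $R^\zeta$ with $\zeta$ free to tend to~$0$), the variance bound (to trade excess risk for the favorable power $\vartheta=p/(p+1)$), and the radius bootstrap, and then carrying out the exponent bookkeeping so that all three terms balance exactly at $\lambda=n^{-4(p+1)/(3(p+2))}$ while the $\zeta$-dependent constants and the confidence term $\log(1/\delta)$ are absorbed into the $n^{\epsilon}$ slack and the squared logarithmic factor.
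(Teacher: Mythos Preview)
Your approach is essentially the same machinery the paper employs, but packaged differently: the paper first proves the general rate Theorem~\ref{mainratesThm} (whose proof is precisely the error decomposition, Bernstein for ${\mathcal S}_2$, the concentration inequality of \cite{WuYingZhou} for ${\mathcal S}_1$, and the radius iteration you sketch), and then obtains Theorem~\ref{quantileThm} in two lines by plugging in $r=\tfrac12$, $\theta=\tfrac{p}{p+1}$ (from \cite{SC1}), arbitrarily small $\zeta$ (from \cite{Zhoucap}), and $\beta=\tfrac{4(p+1)}{3(p+2)}$. You are redoing the same computation directly for these parameter values, so there is no genuinely different route.

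One quantitative point is off in your sketch, though it happens not to damage the final exponent. After the bootstrap on $R$, the contribution of ${\mathcal S}_1$ is \emph{not} of order $n^{-1/(2-\vartheta)}$ as you claim. With $\zeta\downarrow 0$ the iteration stabilizes $R$ at order $n^{(p+1)/(3(p+2))}$, and the leading term $R\,n^{-1/(2-\vartheta)}$ from the concentration inequality is then $n^{-2(p+1)/(3(p+2))}=n^{-\alpha(p)}$, exactly matching (not dominated by) the balanced terms $\lambda^{1/2}$ and $\lambda^{(3\vartheta-2)/8}n^{-1/2}$. In other words, the choice $\beta=\tfrac{4(p+1)}{3(p+2)}$ equalizes \emph{all} five terms in (\ref{explicitratesCZ2}) as $\zeta\to 0$; ${\mathcal S}_1$ is not subdominant. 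Your ``dominated because $\tfrac{1}{2-\vartheta}\ge\tfrac{2}{3(2-\vartheta)}$'' line should therefore be replaced by the correct bookkeeping. Also, the appeal to ``clippability'' is unnecessary here and in fact contrary to the spirit of the result: the paper's point (see the remark after Theorem~\ref{quantileThm}) is that the stated rate holds for the \emph{unclipped} estimator; the sup-norm control in the variance bound comes solely from $\|f\|_\infty\le\kappa\|f\|_H\le\kappa R$ together with the iterated radius, not from any truncation.
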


Please note that the exponent $\alpha(p)$ given by {(\ref{quantilerates2})}
for the learning rate in {(\ref{quantilerates})}
is independent of the quantile level $\tau$,
of the number $s$ of additive components in $f_{L^*,\mathcal{F},P}^*=f_1^*+\ldots+f_s^*$,
and of the dimensions $d_1,\ldots,d_s$ and
$$
d=\sum_{j=1}^s d_j \,.
$$
Further note that $\alpha(p)\in[\frac{1}{2}, \,\frac{2}{3})$, if $p\ge 2$, and $\alpha(p) \to \frac{2}{3}$ if $p\to\infty$.
Because $\epsilon>0$ can be arbitrarily close to $0$, the learning rate, which is independent
of the dimension $d$ and given by Theorem \ref{quantileThm}, is close to $n^{-2/3}$ for large values of $p$ and is close to $n^{-1/2}$ or better, if $p\ge 2$.

\subsection{General learning rates}

To state our general learning rates, we need an assumption
on a \emph{variance-expectation bound} which is similar to Definition \ref{noisecond} in the special case of quantile regression.

\begin{assumption}\label{assumption3} We assume that there exist an exponent $\theta
\in [0, 1]$ and a positive constant $c_\theta$ such that
\begin{eqnarray}
&&\int_{\mathcal Z} \left\{\left(L^*(x, y, f(x)) -L^*(x, y,
f^*_{{\mathcal F}, P}(x)\right)^2\right\} d P(x, y) \nonumber
\\
&\leq& c_\theta \left(1 + \|f\|_\infty\right)^{2-\theta}
\left\{{\mathcal R}_{L^*, P} (f)
-{\mathcal R}_{L^*, P} (f^*_{{\mathcal F}, P})\right\}^\theta,
\quad \forall f\in {\mathcal F}.  \label{varianceexpect}
\end{eqnarray}
\end{assumption}

\begin{remark}
Assumption \ref{assumption3} always holds true for $\theta =0$.
If the triple $(P,{\mathcal F}, L)$ satisfies some conditions, the exponent $\theta$
can be larger. For example, when $L$ is the pinball loss (\ref{pinloss}) and $P$ has
a $\tau$-quantile of $p$-average type $q$ for some $p\in (0,
\infty]$ and $q\in (1, \infty)$ as defined in
\cite{SteinwartChristmann2008a}, then $\theta = \min\{\frac{2}{q},
\frac{p}{p+1}\}$.
\end{remark}

\begin{theorem}\label{mainratesThm}
Suppose that $L(x, y, 0)$ is bounded by a constant $|L|_0$ almost
surely. Under Assumptions \ref{assumption1} to \ref{assumption3}, if we take $\epsilon
>0$ and $\lambda =n^{-\beta}$ for some $\beta >0$, then for any $0<
\delta <1$, with confidence at least $1- \delta$ we have
\begin{equation}\label{explicitrates}
{\mathcal R}_{L^*, P} (f_{L, {\mathbb D}_n, \lambda}) - {\mathcal
R}^*_{L^*, P, {\mathcal F}} \leq \widetilde{C} \left(\log
\frac{2}{\delta} + \log \left(\log \frac{1}{\epsilon} +2\right)
\right)^2 n^{\epsilon- \alpha(r, \beta, \theta, \zeta)},
\end{equation}
where
$\alpha(r, \beta, \theta, \zeta)$ is given by
\begin{eqnarray} \label{explicitratesCZ2}
&  \min\Biggl\{r\beta, \ \frac{1}{2} +
\beta\left(\frac{\theta (1+r)}{4} - \frac{1-r}{2}\right), \
\frac{4}{4-2\theta+ \zeta\theta} - \beta, \qquad \qquad \qquad &  \\
 & ~~~~ \frac{2}{4-2\theta+ \zeta\theta} - \frac{(1-r) \beta}{2}, \
\frac{2}{4-2\theta+ \zeta\theta} - \frac{(1-r) \beta}{2} -
\frac{\beta(1+r)(1- \frac{\theta}{2}) -1}{4}\Biggr\} &  \nonumber
\end{eqnarray} and $\widetilde{C}$ is constant independent of
$n$ or $\delta$ (to be given explicitly in the proof).
\end{theorem}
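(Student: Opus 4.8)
The plan is to reduce Theorem \ref{mainratesThm} to the two structural estimates already established — the approximation–error bound (Theorem \ref{approxerrorThm}) and the dimension–independent covering–number bound (Theorem \ref{capacityThm}) — combined with a by–now standard sample–error analysis based on a one–sided ratio–type concentration inequality (in the spirit of \cite{SteinwartChristmann2008a} and \cite{SZII}), followed by an optimization over an auxiliary ball radius. Throughout put $g_f := L^*(\cdot,\cdot,f(\cdot)) - L^*(\cdot,\cdot,f^*_{{\mathcal F},P}(\cdot))$, so that $\int_{{\mathcal Z}} g_f\, dP = {\mathcal R}_{L^*,P}(f) - {\mathcal R}^*_{L^*,P,{\mathcal F}}$; note that (\ref{Lipsch}) gives $\|g_f\|_\infty \le |L|_1(\|f\|_\infty + \|f^*_{{\mathcal F},P}\|_\infty)$ and that Assumption \ref{assumption3} is exactly the variance–expectation bound $\int g_f^2\, dP \le c_\theta(1+\|f\|_\infty)^{2-\theta}\bigl(\int g_f\, dP\bigr)^\theta$. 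Fixing an $f_\lambda \in H$ with ${\mathcal R}_{L^*,P}(f_\lambda) - {\mathcal R}^*_{L^*,P,{\mathcal F}} + \lambda\|f_\lambda\|_H^2 \le {\mathcal D}(\lambda) + \lambda$, the optimality of $f_{L,{\mathbb D}_n,\lambda}$ for the empirical regularized problem yields the error decomposition
$$ {\mathcal R}_{L^*,P}(f_{L,{\mathbb D}_n,\lambda}) - {\mathcal R}^*_{L^*,P,{\mathcal F}} + \lambda\|f_{L,{\mathbb D}_n,\lambda}\|_H^2 \;\le\; {\mathcal S}_1 + {\mathcal S}_2 + {\mathcal D}(\lambda) + \lambda , $$
with the data–dependent sample error ${\mathcal S}_1 = \int g_{f_{L,{\mathbb D}_n,\lambda}}\, dP - \frac{1}{n}\sum_{i=1}^n g_{f_{L,{\mathbb D}_n,\lambda}}(x_i,y_i)$ and the fixed–function sample error ${\mathcal S}_2 = \frac{1}{n}\sum_{i=1}^n g_{f_\lambda}(x_i,y_i) - \int g_{f_\lambda}\, dP$. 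Theorem \ref{approxerrorThm} bounds ${\mathcal D}(\lambda) \le C_r\lambda^r = C_r n^{-r\beta}$, contributing the term $r\beta$ to (\ref{explicitratesCZ2}).

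For the a priori estimates needed below, testing the definition of $f_{L,{\mathbb D}_n,\lambda}$ against $f = 0$ together with $L^*(x,y,0) = 0$, $L \ge 0$ and $L(x,y,0) \le |L|_0$ gives $\lambda\|f_{L,{\mathbb D}_n,\lambda}\|_H^2 \le |L|_0$, hence $\|f_{L,{\mathbb D}_n,\lambda}\|_H \le \sqrt{|L|_0}\,\lambda^{-1/2}$, while ${\mathcal D}(\lambda) + \lambda \le (C_r+1)\lambda^r$ gives $\|f_\lambda\|_H \le \sqrt{C_r+1}\,\lambda^{(r-1)/2}$; moreover the reproducing property and $\sqrt{k(x,x)} = \sqrt{\sum_{j} k_j(x_j,x_j)} \le \sum_{j}\sqrt{k_j(x_j,x_j)} \le \kappa$ (Assumption \ref{assumption2}) yield $\|f\|_\infty \le \kappa\|f\|_H$ for all $f \in H$. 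The term ${\mathcal S}_2$ is then treated by a one–sided Bernstein inequality applied to the i.i.d.\ variables $g_{f_\lambda}(x_i,y_i)$, using $\|g_{f_\lambda}\|_\infty \lesssim \lambda^{(r-1)/2}$, the variance bound, and $\int g_{f_\lambda}\, dP \le {\mathcal D}(\lambda) + \lambda$; inserting $\lambda = n^{-\beta}$ and simplifying, the dominant $\sqrt{(\text{variance})\log(2/\delta)/n}$ piece reproduces the second exponent $\tfrac{1}{2} + \beta\bigl(\tfrac{\theta(1+r)}{4} - \tfrac{1-r}{2}\bigr)$ in (\ref{explicitratesCZ2}), the Bernstein boundedness piece being of strictly lower order.

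The term ${\mathcal S}_1$ is controlled by a ratio–type uniform concentration inequality over the ball $B_R := \{f \in H : \|f\|_H \le R\}$: Theorem \ref{capacityThm} and the Lipschitz property give $\log {\mathcal N}(\{g_f : f \in B_R\},\epsilon) \le s^{1+\zeta}c_\zeta(|L|_1 R/\epsilon)^\zeta$, so the class has $\ell_2$–empirical covering exponent $\zeta$ with an $R^\zeta$ prefactor, and together with $\|g_f\|_\infty \le |L|_1(\kappa R + \|f^*_{{\mathcal F},P}\|_\infty)$ and $\int g_f^2\, dP \le c_\theta(1+\kappa R)^{2-\theta}\bigl(\int g_f\, dP\bigr)^\theta$ for $f \in B_R$ the concentration estimate gives, with confidence $1-\delta/2$,
$$ {\mathcal S}_1 \;\le\; \tfrac{1}{2}\bigl({\mathcal R}_{L^*,P}(f_{L,{\mathbb D}_n,\lambda}) - {\mathcal R}^*_{L^*,P,{\mathcal F}}\bigr) + \Psi(R,n,\delta) , $$
where $\Psi(R,n,\delta)$ is, up to $\log(1/\delta)$ factors and lower–order Bernstein–type pieces, of order $R\, n^{-2/(4-2\theta+\zeta\theta)}$; the exponent $\tfrac{2}{4-2\theta+\zeta\theta}$ is the fixed point of the local–complexity balance in which the $R$–dependences coming from the covering number ($R^\zeta$) and from the variance factor ($R^{2-\theta}$) combine so that the prefactor is exactly $R^1$. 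Absorbing the $\tfrac12$–term into the left side of the error decomposition and using the crude radius $R = \sqrt{|L|_0}\,\lambda^{-1/2}$ first, the right side also bounds $\lambda\|f_{L,{\mathbb D}_n,\lambda}\|_H^2$; solving the resulting quadratic inequality in $\|f_{L,{\mathbb D}_n,\lambda}\|_H$ and iterating this feedback finitely many times stabilizes the effective radius at one of the two competing values $R \asymp \lambda^{(r-1)/2}$ (when the approximation error governs the norm) and $R \asymp \lambda^{-1}n^{-2/(4-2\theta+\zeta\theta)}$ (the pure bootstrap fixed point). Re–inserting these into $\Psi$ and setting $\lambda = n^{-\beta}$ yields the third exponent $\tfrac{4}{4-2\theta+\zeta\theta} - \beta$ (second regime) and the fourth exponent $\tfrac{2}{4-2\theta+\zeta\theta} - \tfrac{(1-r)\beta}{2}$ (first regime), while keeping the lower–order Bernstein–type pieces of $\Psi$ in the first regime produces the fifth exponent; taking the minimum over all these contributions and over $r\beta$ gives $\alpha(r,\beta,\theta,\zeta)$. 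The $\epsilon$–loss in the final exponent and the squared factor $\bigl(\log\tfrac{2}{\delta} + \log(\log\tfrac{1}{\epsilon}+2)\bigr)^2$ come from running the iteration uniformly over the $O(\log(1/\epsilon))$ dyadic scales of $R$ and from the union bound over these scales together with the two concentration steps.

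The main obstacle is precisely this bootstrap. The auxiliary radius $R = \|f_{L,{\mathbb D}_n,\lambda}\|_H$ enters $\Psi$ simultaneously through the covering–number prefactor ($R^\zeta$, via Theorem \ref{capacityThm}) and through the $(1+\kappa R)^{2-\theta}$ factor of the variance–expectation bound (Assumption \ref{assumption3}), and the interplay of these two dependences with the penalty $\lambda\|\cdot\|_H^2$ is exactly what forces the five–way minimum defining $\alpha(r,\beta,\theta,\zeta)$; making the ratio–type concentration inequality quantitative enough — with all constants explicit and with the correct dependence on $\epsilon$ and $\delta$ — so that the iteration closes is where the bulk of the technical effort lies, the remainder being careful but routine bookkeeping of exponents.
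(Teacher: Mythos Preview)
Your proposal is correct and follows essentially the same route as the paper: error decomposition into ${\mathcal S}_1 + {\mathcal S}_2 + {\mathcal D}(\lambda)$, Theorem \ref{approxerrorThm} for ${\mathcal D}(\lambda)$, a one-sided Bernstein bound for ${\mathcal S}_2$, a ratio-type concentration inequality over balls $B_R$ (using Theorem \ref{capacityThm} for the covering numbers) for ${\mathcal S}_1$, and then an iteration in $R$ starting from the crude bound $R \le \sqrt{|L|_0/\lambda}$ and run for $O(\log(1/\epsilon))$ steps. Your attribution of the five exponents is slightly imprecise in one place --- the fifth term arises not from a Bernstein piece of $\Psi$ itself but from the ${\mathcal S}_2$-controlled radius regime being fed back into $\Psi$ --- but the overall strategy and bookkeeping match the paper.
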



\section{Comparison of learning rates}\label{ComparisonSection}

We now add some theoretical and numerical comparisons on the goodness of our learning rates
with those from the literature.
As already mentioned in the introduction, some reasons for the popularity of additive models are
flexibility, increased interpretability, and (often) a reduced proneness of the curse of high dimensions.
Hence it is important to check, whether the learning rate given in Theorem \ref{mainratesThm} under the
assumption of an additive model
favourably compares to (essentially) optimal learning rates without this assumption.
In other words, we need to demonstrate that the main goal of this paper is achieved by
Theorem \ref{quantileThm} and Theorem \ref{mainratesThm}, i.e. that an SVM based on an additive kernel can provide
a substantially better learning rate in high dimensions than an SVM with a general kernel, say a classical Gaussian RBF kernel,
provided the assumption of an additive model is satisfied.

\begin{remark}
Our learning rate in Theorem \ref{quantileThm} is new and optimal in the literature of SVM for quantile regression. Most learning rates in the literature of SVM for quantile regression are given for projected output functions $\Pi_{|L|_0}(f_{L, {\mathbb D}_n, \lambda})$, while it is well known
that projections improve learning rates \cite{WuYingZhouFoCM}.
Here the projection operator $\Pi_{|L|_0}$ is defined for any measurable function $f: {\mathcal X} \to \RR$ by
\begin{equation}\label{projection}
\Pi_{|L|_0} (f) (x) =\left\{\begin{array}{ll}
f(x), & \hbox{if} \ |f(x)| \leq |L|_0, \\
|L|_0, & \hbox{if} \ f(x) > |L|_0, \\
-|L|_0, & \hbox{if} \ f(x) < - |L|_0.  \end{array}\right. \end{equation}
Sometimes this is called clipping.
Such results are given in \cite{SC2, WuYingZhou}. For example, under the assumptions that $P$ has a
$\tau$-quantile of $p$-average type $2$, the approximation error condition (\ref{approxerrorB})
is satisfied for some $0< r \leq 1$, and that for some constants $a\geq 1, \xi \in (0, 1)$, the
sequence of eigenvalues $(\lambda_i)$ of the integral operator $L_k$ satisfies $\lambda_i \leq a i^{-1/\xi}$ for every $i\in\NN$, it was shown in \cite{SC2} that with confidence at least $1- \delta$,
$$ {\mathcal R}_{L^*, P} \left(\Pi_{|L|_0} (f_{L, {\mathbb D}_n, \lambda})\right) - {\mathcal
R}^*_{L^*, P, {\mathcal F}} \leq \widetilde{C} \log
\frac{2}{\delta} n^{-\alpha}, $$
where
$$ \alpha = \min\left\{\frac{(p+1) r}{(p+2)r +(p+1 - r) \xi}\, , \,\,\frac{2 r}{r+1}\right\}. $$
Here the parameter $\xi$ measures the capacity of the RKHS $H_k$ and it plays a similar role as
half of the parameter $\zeta$ in Assumption 2. For a $C^\infty$ kernel and $r=\frac{1}{2}$, one can
choose $\xi$ and $\zeta$ to be arbitrarily small and the above power index $\alpha$ can be taken
as $\alpha = \min\{\frac{p+1}{p+2}, \frac{2}{3}\} -\epsilon$.

The learning rate in Theorem \ref{quantileThm} may be improved by relaxing Assumption 1
to a Sobolev smoothness condition for $f^*_{{\mathcal F}, P}$ and a regularity condition
for the marginal distribution $P_{\mathcal X}$. For example, one may use a Gaussian kernel
$k=k(n)$ depending on the sample size $n$ and \cite{SteinwartScovel} achieve the
approximation error condition (\ref{approxerrorB}) for some $0< r <1$. This is done for
quantile regression in \cite{Xiang, EbertsSteinwart2013}. Since we are mainly interested in additive
models, we shall not discuss such an extension.
\end{remark}

\begin{example}\label{GaussMore}
Let $s=2$, ${\mathcal X}_1 ={\mathcal X}_2 =[0, 1]$ and ${\mathcal
X} = [0, 1]^2.$ Let $\sigma >0$ and the additive kernel $k$ be given by (\ref{GaussAddForm}) with $k_1, k_2$ in
Example \ref{GaussAdd} as
$$ k_1 (u, v) = k_2 (u, v) =
\exp\left(-\frac{|u-v|^2}{\sigma^2}\right), \qquad u, v\in [0,
1]. $$ If the function $f^*_{{\mathcal F}, P}$ is given by (\ref{gaussfcn}), $|y| \leq |L|_0$ almost surely for some constant
$|L|_0 >0$, and $P$ has a
$\tau$-quantile of $p$-average type $2$ for some $p\in (0,
\infty]$, then by taking $\lambda =n^{-\frac{4(p+1)}{3(p+2)}}$,
for any $\epsilon >0$ and $0<\delta <1$, (\ref{quantilerates}) holds with confidence
at least $1- \delta$.
\end{example}

\begin{remark}
It is unknown whether the above learning rate can be derived by existing approaches
in the literature (e.g. \cite{SC2, SteinwartScovel, WuYingZhou, Xiang, EbertsSteinwart2013})
even after projection. Note that the kernel in the above example is independent of the sample size. 
It would be interesting to see whether there exists some $r>0$ such that the function $f$ defined by
{(\ref{gaussfcn})} lies in the range of the operator $L^r_{k^{\Pi}}$.
The existence of such a positive index would lead to the approximation error condition {(\ref{approxerrorB})}, see \cite{SZII, SunWu}.
\end{remark}

Let us now add some numerical comparisons on the goodness of our learning rates given by Theorem \ref{mainratesThm}
with those given by \cite{EbertsSteinwart2013}. Their Corollary 4.12 gives (essentially) minmax optimal learning
rates for (clipped) SVMs in the context of nonparametric quantile regression using one Gaussian RBF kernel on the whole input space
under appropriate smoothness assumptions of the target function.
Let us consider the case that the distribution $P$ has a $\tau$-quantile of $p$-average type $2$, where
$p=\infty$, and assume that both Corollary 4.12 in \cite{EbertsSteinwart2013} and
our Theorem \ref{mainratesThm} are applicable. I.e.,
we assume in particular that $P$ is a probability measure on
$\mathcal{X}\times\mathcal{Y}:=\R^d\times [-1,+1]$ and that
the marginal distribution $P_\mathcal{X}$ has a Lebesgue density $g\in L_w(\R^d)$ for some $w\ge 1$.
Furthermore, suppose that the optimal decision function $f_{L^*,\mathcal{F},P}^*$ has
(to make Theorem \ref{mainratesThm} applicable with $r\in(0,\frac{1}{2}]$) the additive structure
$f_{L^*,\mathcal{F},P}^*=f_1^*+\ldots+f_s^*$ with each $f_j^*$ as stated in Assumption \ref{assumption1}, where
$\mathcal{X}_j=\R^{d_j}$ and $d:=\sum_{j=1}^s d_j$,
with minimal risk ${\mathcal R}^*_{L^*, P, {\mathcal F}}$ and additionally fulfills
(to make Corollary 4.12 in \cite{EbertsSteinwart2013} applicable)
$$
  f^*_{L^*, P, {\mathcal F}}  \in  L_2(\R^d)\cap L_\infty(\R^d)\cap B_{2s,\infty}^\alpha(\R^d)
$$
where $s:=\frac{w}{w-1}\in[1,\infty]$ and $B_{2s,\infty}^\alpha(\R^d)$ denotes a Besov space
with smoothness parameter $\alpha \ge 1$. The intuitive meaning of $\alpha$ is, that increasing values of $\alpha$ correspond to increased
smoothness.
We refer to \cite[p. 25-27 and p. 44]{EdmundsTriebel} for details on Besov spaces.
It is well-known that the Besov space $B_{p,q}^\alpha(\R^d)$ contains the
Sobolev space $W_p^\alpha(\R^d)$ for $\alpha\in\N$, $p\in(1,\infty)$, and $\max\{p,2\} \le q \le \infty$, and that
$W_2^\alpha(\R^d)=B_{2,2}^\alpha(\R^d)$.
We mention that if all $k_j$ are suitably chosen Wendland kernels,
their reproducing kernel Hilbert spaces $H_j$ are Sobolev spaces, see \cite[Thm. 10.35, p. 160]{Wendland2005}.
Furthermore, we use the same sequence of regularizing parameters as in
\cite[Cor. 4.9, Cor. 4.12]{EbertsSteinwart2013}, i.e.,
\begin{equation}\label{lambdaESsequence}
 \lambda_n = c_1 n^{-\beta_{ES}(d,\alpha,\theta)} \mbox{~, where~}
 \beta_{ES}(d,\alpha,\theta):=\frac{2\alpha+d}{2\alpha(2-\theta)+d}, \quad n\in\N,
\end{equation}
where $d\in\N$, $\alpha\ge 1$, $\theta\in[0,1]$, and
$c_1$ is some user-defined positive constant independent of $n\in\N$. For reasons of simplicity,
let us fix $c_1=1$.
Then \cite[Cor. 4.12]{EbertsSteinwart2013} gives learning rates for the risk of SVMs for $\tau$-quantile regression,
if a single Gaussian RBF-kernel on $\mathcal{X}\subset \R^d$ is used for $\tau$-quantile
functions of $p$-average type $2$ with $p=\infty$, which are of order
$$
c_2 n^{\epsilon-\alpha_{ES}(d,\alpha)}, \mbox{~~where~~} \alpha_{ES}(d,\alpha)=\frac{2\alpha}{2\alpha+d}\, .
$$
Hence the learning rate in Theorem \ref{quantileThm} is better than the one in \cite[Cor. 4.12]{EbertsSteinwart2013} in this situation, if
$$
\alpha(r, \beta_{ES}(d,\alpha,\theta), \theta, \zeta) > \alpha_{ES}(d,\alpha)\, ,
$$
provided the assumption of the additive model is valid.
Table \ref{table1} lists the values of $\alpha(r,\beta_{ES}(d,\alpha,\theta),\theta,\zeta)$ from {(\ref{explicitratesCZ2})}
for some finite values of the dimension $d$, where $\alpha \in [1,\infty)$. All of these values of $\alpha(r,\beta_{ES}(d,\alpha,\theta),\theta,\zeta)$ are positive with
the exceptions if $\theta=0$ or $\zeta\to 2$.
This is in contrast to the corresponding exponent in the learning rate by
\cite[Cor. 4.12]{EbertsSteinwart2013}, because
$$
\lim_{d\to\infty} \alpha_{ES}(d,\alpha)=\lim_{d\to\infty} \frac{2\alpha}{2\alpha+d}=0,
\qquad \forall\,\alpha\in[1,\infty).
$$

Table \ref{table2} and Figures \ref{figure1} to \ref{figure2} give additional information on the limit
$\lim_{d\to\infty} \alpha(r,\beta_{ES}(d,\alpha,\theta),\theta,\zeta)$.
Of course, higher values of the exponent indicates faster rates of convergence.
It is obvious, that an SVM based on an additive kernel has a significantly faster rate of convergence in higher
dimensions $d$ compared to SVM based on a single Gaussian RBF kernel defined on the whole input space,
of course under the assumption that the additive model is valid.
The figures seem to indicate that our learning rate from Theorem \ref{mainratesThm}
is probably not optimal for small dimensions. However, the main focus of the present paper is on high dimensions.

\begin{table}
\begin{center}

\begin{tabular}{rrrr}
 \hline \hline
 $\theta\in[0,1]$ & $\zeta\in(0,2)$ & $\lim_{d\to\infty}\alpha_{ES}(d,\alpha)$  &  $\lim_{d\to\infty}\alpha(r,\beta_{ES}(d,\alpha,\theta),\theta,\zeta)$ \\
                  &                 & from \cite[Cor. 4.12]{EbertsSteinwart2013} & from Thm. \ref{mainratesThm} \\
 \hline
 $>0$ & fixed & $0$ & positive \\
 $1$ & $1$ & $0$ & $\min\{r,1/3\}$ \\
 $1$ & $3/2$ & $0$ & $\min\{r,1/7\}$ \\
 $1/2$ & $1$ & $0$ & $\min\{r,1/7\}$ \\
 $0$ & fixed & $0$ & $0$ \\
 $\in[0,1]$ & $\to 2$ & $0$ & $0$ \\
 \hline
 \hline
 \end{tabular}
\caption{\label{table1} The table lists the limits of the exponents $\lim_{d\to\infty}\alpha_{ES}(d,\alpha)$
from \cite[Cor. 4.12]{EbertsSteinwart2013} and $\lim_{d\to\infty}\alpha(r,\beta_{ES}(d,\alpha,\theta),\theta,\zeta)$
from Theorem \ref{mainratesThm}, respectively,
if the regularizing parameter $\lambda=\lambda_n$ is chosen in an optimal manner for the nonparametric
setup, i.e. $\lambda_n=n^{-\beta_{ES}(d,\alpha,\theta)}$, with $\beta_{ES}(d,\alpha,\theta)\to 1$ for $d\to\infty$ and $\alpha\in[1,\infty)$.
Recall that $r\in (0,\frac{1}{2}]$.}
\end{center}
\end{table}

\begin{table}
\begin{center}
 \begin{tabular}{rrrr}
 \hline \hline
  $r$  &  $\theta$ &  $\zeta$ & $\lim_{d\to \infty} \alpha(r,\beta_{ES}(d,\alpha,\theta),\theta,\zeta)$ \\
\hline
 0.5  &  1  &  0.1  &  0.5   \\
      &     &  1  &  0.333   \\
      &     &  1.9  &  0.026   \\
 \hline
 0.5  &  0.5  &  0.1  &  0.311   \\
      &       &  1  &  0.143   \\
      &       &  1.9  &  0.013   \\
 \hline
 0.5  &  0.1  &  0.1  &  0.05   \\
      &       &  1  &  0.026   \\
      &       &  1.9  &  0.003   \\
\hline
 0.25  &  1  &  0.1  &  0.25   \\
       &     &  1  &  0.25   \\
       &     &  1.9  &  0.026   \\
\hline
 0.25  &  0.5  &  0.1  &  0.25   \\
       &       &  1  &  0.143   \\
       &       &  1.9  &  0.013   \\
\hline
 0.25  &  0.1  &  0.1  &  0.05   \\
       &       &  1  &  0.026   \\
       &       &  1.9  &  0.003   \\
\hline
 0.1  &  1  &  0.1  &  0.1   \\
      &     &  1  &  0.1   \\
      &     &  1.9  &  0.026   \\
\hline
 0.1  &  0.5  &  0.1  &  0.1   \\
      &       &  1  &  0.1   \\
      &       &  1.9  &  0.013   \\
\hline
 0.1  &  0.1  &  0.1  &  0.05   \\
      &       &  1  &  0.026   \\
      &       &  1.9  &  0.003   \\
\hline \hline
\end{tabular}
\caption{\label{table2} The table lists the limits of the exponents $\lim_{d\to\infty} \alpha(r,\beta_{ES}(d,\alpha,\theta),\theta,\zeta)$
from Theorem \ref{mainratesThm},
if the regularizing parameter $\lambda$ is chosen in optimal manner for the nonparametric
setup, i.e. $\lambda=n^{-(2\alpha+d)/(2\alpha(2-\theta)+d)}$ with $\alpha\in[1,\infty)$ and $\theta\in[0,1]$,
see \cite[Cor. 4.12]{EbertsSteinwart2013}.}
\end{center}
\end{table}

\begin{figure}
 \begin{center}
\includegraphics[width=\textwidth]{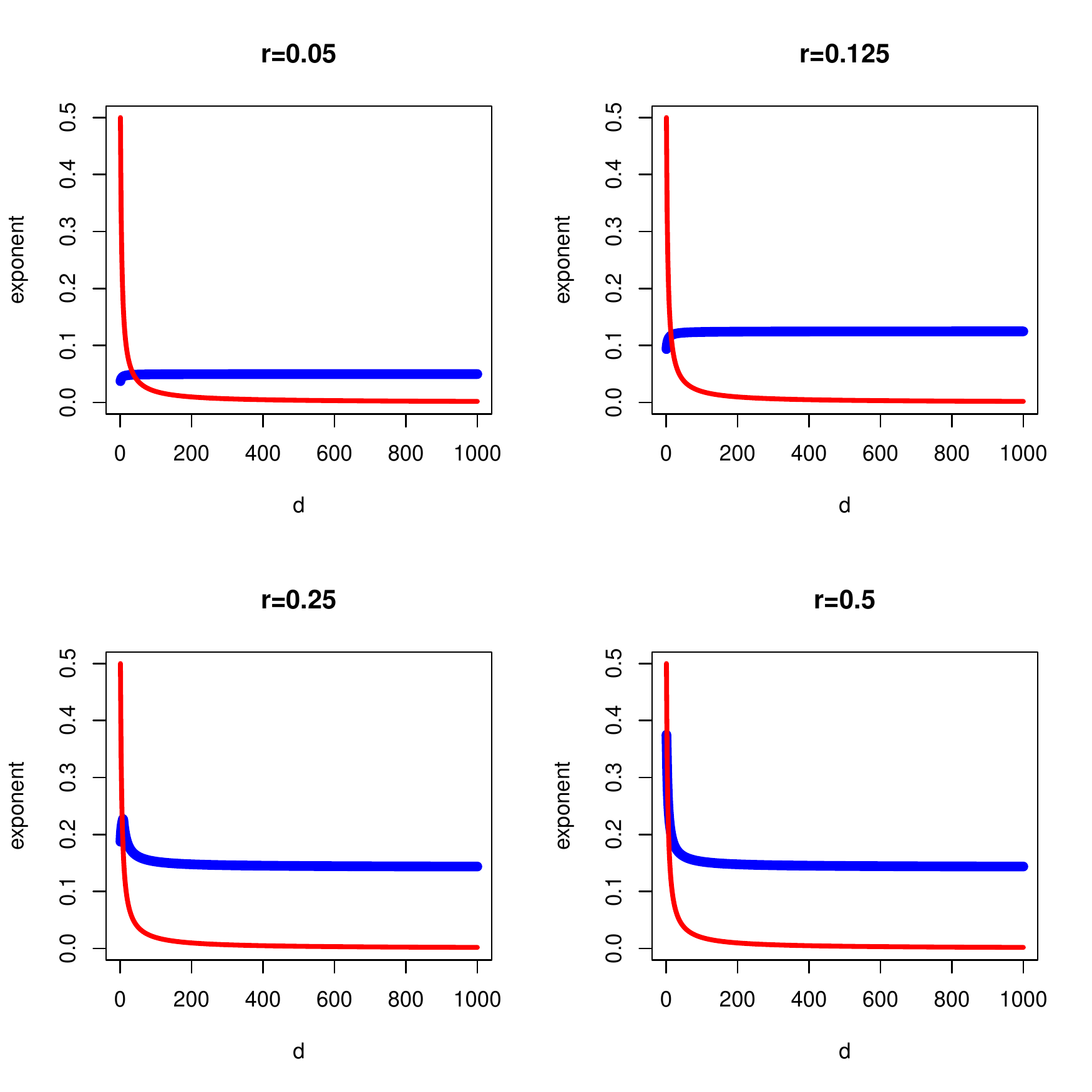}
  \caption{\label{figure1} Plots of the exponents $\lim_{d\to\infty} \alpha(r,\beta_{ES}(d,\alpha,\theta),\theta,\zeta)$
from Theorem \ref{mainratesThm} (thick curve) and \cite[Cor. 4.12]{EbertsSteinwart2013} (thin curve) versus the dimension $d$,
if the regularizing parameter $\lambda=\lambda_n$ is chosen in an optimal manner for the nonparametric
setup, i.e. $\lambda_n=n^{-(2\alpha+d)/(2\alpha(2-\theta)+d)}$ with $\alpha=1$.
We set $\theta=0.5$ and $\zeta=1$.}
\end{center}
\end{figure}

\begin{figure}
 \begin{center}
  \includegraphics[width=\textwidth]{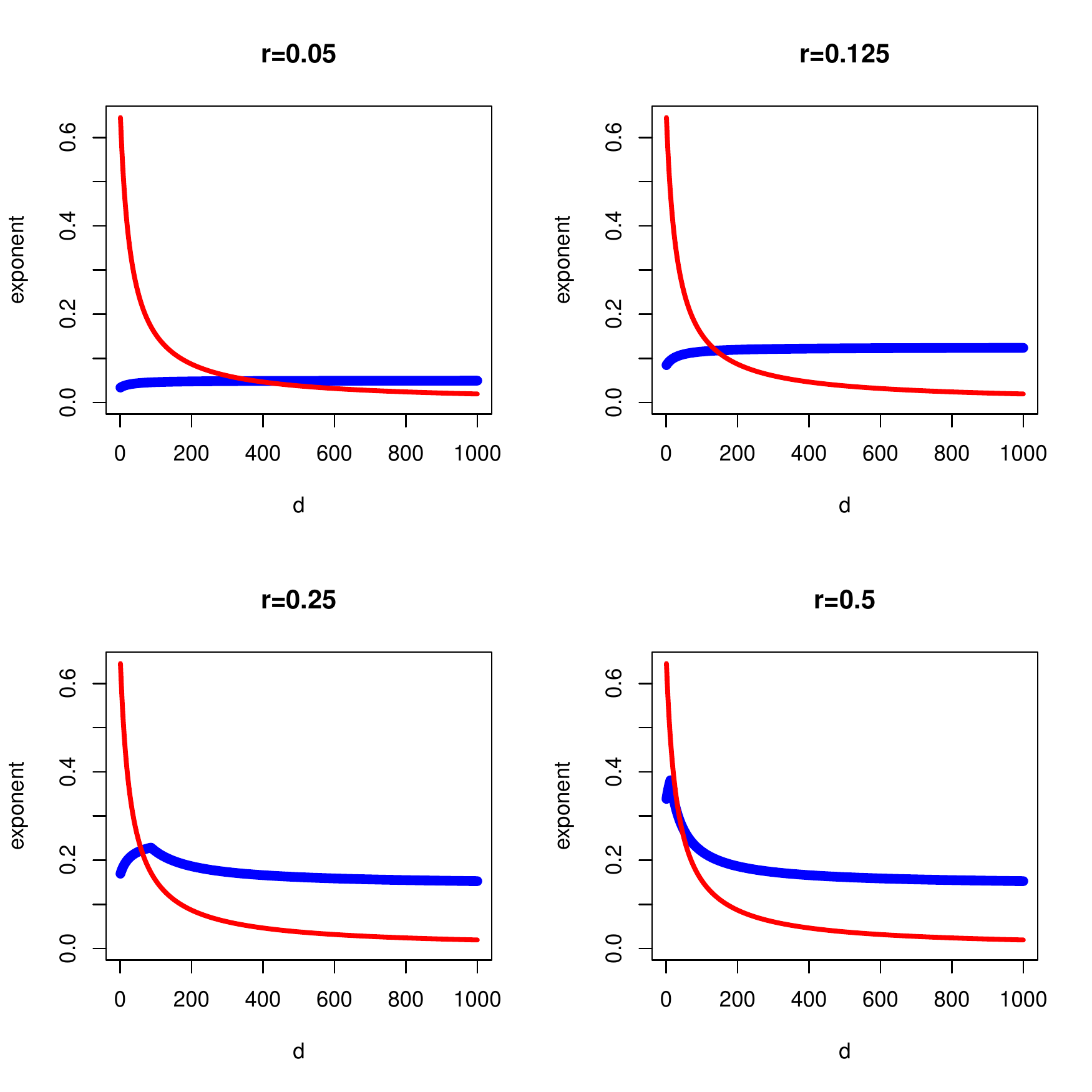}
  \caption{\label{figure2} Similar to Figure \ref{figure1}, but for $\alpha=10$.}
\end{center}
\end{figure}

We now briefly comment on the goodness of the learning rate provided by Theorem \ref{quantileThm}.
Let us assume that the distribution $P$ on $\mathcal{X}\times\mathcal{Y}:=\R^d\times [-1,+1]$ has a $\tau$-quantile
of $p$-average type $q=2$ for some $p\in(1,\infty]$. Furthermore,
consider the sequence of regularizing parameters
$$
\lambda:= c_1 n^{-\beta_{ES}(d,\alpha,\theta)} \mbox{~, ~with~~}
  \beta_{ES}(d,\alpha,\theta):=\frac{2\alpha+d}{2\alpha(2-\theta)+d} ~,
$$
where $c_1>0$, $\alpha\ge 1$, and $\theta\in[0,1]$.
For reasons of simplicity, we set $c_1=1$.
Under the assumptions of Corollary 4.9 in \cite{EbertsSteinwart2013}, the learning rate for the risk
of SVMs for $\tau$-quantile regression, when a single Gaussian RBF-kernel on $\mathcal{X}=\R^d$
is used, is then of order
$$
  c_2 n^{\epsilon-\alpha_{ES}(d,\alpha,\theta)}, \mbox{~~where~~}
  \alpha_{ES}(d,\alpha,\theta)=\frac{2\alpha}{2\alpha(2-\theta)+d} ~ ,
$$
where $c_2>0$ is a constant independent of $n$.
If $\alpha$, $\theta$, and $p$ are chosen such that
$\frac{2\alpha+d}{2\alpha(2-\theta)+d} = \frac{4(p+1)}{3(p+2)}$ is fulfilled with $d\in\N$,
we can make a fair comparison between the learning rates given  by
\cite[Cor. 4.9]{EbertsSteinwart2013} and by Theorem \ref{quantileThm}, respectively.
Obviously, the learning rate given in Theorem \ref{quantileThm} favourably compares to the one given by
\cite[Cor. 4.9]{EbertsSteinwart2013} for high dimensions $d$, if the assumption of an additive model is satisfied, because the exponent
$\alpha(p)=\frac{2(p+1)}{3(p+2)}$ in Theorem \ref{quantileThm} is positive and independent of $d\in\N$,
whereas $\alpha_{ES}(d,\alpha,\theta)\to 0$, if $d\to\infty$.

Summarizing, the following conclusion seems to be fair. If an additive model is valid and the dimension $d$ of
$X=\R^d$ is high, then it makes sense to use an additive kernel, because
(i) from a theoretical point of view: faster rate of convergence, (ii) from the big data point of view:
the same accuracy of estimating the risk can in principle be achieved already with much smaller data sets,
(iii) from an applied point of view: increased interpretability and flexibility.


\section{Proofs}\label{ProofSection}

This section contains all the proofs of this paper. As some of the results may be interesting in their own, we
treat the topics estimation of the approximation error, the proof of the somewhat surprising assertion in Example \ref{GaussAdd},
sample error estimates, and the proofs of our learning rates from Section \ref{RatesSection} in different subsections.

\subsection{Estimating the approximation error}\label{approximationerror}

To carry out our analysis, we need an error decomposition framework.

\begin{lemma}\label{ErrorDecomThm}
There holds
\begin{equation}\label{ErrorDecomB}
{\mathcal R}_{L^*, P} (f_{L, {\mathbb D}_n, \lambda}) - {\mathcal
R}^*_{L^*, P, {\mathcal F}} + \lambda \|f_{L, {\mathbb D}_n,
\lambda}\|^2_H \leq {\mathcal S} + {\mathcal D}(\lambda),
\end{equation}
where the terms are defined as
\begin{eqnarray}
{\mathcal S} &=& \left\{{\mathcal R}_{L^*, P} (f_{L, {\mathbb
D}_n, \lambda}) - {\mathcal R}_{L^*, {\mathbb D}_n} (f_{L,
{\mathbb D}_n, \lambda})\right\} \nonumber \\
&& + \left\{{\mathcal R}_{L^*, {\mathbb D}_n} (f_{L, P, \lambda}) - {\mathcal R}_{L^*, P} (f_{L, P, \lambda})\right\}, \label{sampleErrorDef} \\
{\mathcal D}(\lambda) &=& {\mathcal R}_{L^*, P} (f_{L, P, \lambda}) - {\mathcal R}^*_{L^*, P, {\mathcal F}} + \lambda \|f_{L, P, \lambda}\|^2_H. \label{approxErrorDef}
\end{eqnarray}
\end{lemma}

\begin{proof}
We compare the risk with the empirical risk and write ${\mathcal R}_{L^*, P} (f_{L, {\mathbb D}_n, \lambda})$ as
$\left\{{\mathcal R}_{L^*, P} (f_{L, {\mathbb D}_n, \lambda}) - {\mathcal R}_{L^*, {\mathbb D}_n} (f_{L, {\mathbb D}_n, \lambda})\right\} + {\mathcal R}_{L^*, {\mathbb D}_n} (f_{L, {\mathbb D}_n, \lambda})$. Then we add and subtract a term involving the function $f_{L, P, \lambda}$ to find
\begin{eqnarray*}
&& {\mathcal R}_{L^*, P} (f_{L, {\mathbb D}_n, \lambda}) - {\mathcal R}^*_{L^*, P, {\mathcal F}} + \lambda \|f_{L, {\mathbb D}_n, \lambda}\|^2_H \\
&=& \left\{{\mathcal R}_{L^*, P} (f_{L, {\mathbb D}_n, \lambda}) - {\mathcal R}_{L^*, {\mathbb D}_n} (f_{L, {\mathbb D}_n, \lambda})\right\} \\
 && + \left\{\left({\mathcal R}_{L^*, {\mathbb D}_n} (f_{L, {\mathbb D}_n, \lambda}) + \lambda \|f_{L, {\mathbb D}_n, \lambda}\|^2_H\right) -
  \left({\mathcal R}_{L^*, {\mathbb D}_n} (f_{L, P, \lambda}) + \lambda \|f_{L, P, \lambda}\|^2_H\right)\right\} \\
&& + \left\{{\mathcal R}_{L^*, {\mathbb D}_n} (f_{L, P, \lambda}) - {\mathcal R}_{L^*, P} (f_{L, P, \lambda})\right\} \\
&& + \left\{{\mathcal R}_{L^*, P} (f_{L, P, \lambda}) - {\mathcal R}^*_{L^*, P, {\mathcal F}} + \lambda \|f_{L, P, \lambda}\|^2_H\right\}.
\end{eqnarray*}
But $\left({\mathcal R}_{L^*, {\mathbb D}_n} (f_{L, {\mathbb D}_n, \lambda}) + \lambda \|f_{L, {\mathbb D}_n, \lambda}\|^2_H\right) -
  \left({\mathcal R}_{L^*, {\mathbb D}_n} (f_{L, P, \lambda}) + \lambda \|f_{L, P, \lambda}\|^2_H\right) \leq 0$ by the definition of $f_{L, {\mathbb D}_n, \lambda}$. Then the desired statement is proved.
\end{proof}

In the error decomposition (\ref{ErrorDecomB}), the first term
${\mathcal S}$ is called \emph{sample error} and will be dealt with
later on. The second term ${\mathcal D}(\lambda)$ is the
\emph{approximation error} which can be stated equivalently by Definition \ref{Defapprox}.

In this section we estimate the approximation error based on
Assumption \ref{assumption1}. Our estimation is based on the following lemma which
is proved by the same method as that in \cite{SZII}. Recall that the integral operator $L_{k_j}$ is a positive operator on $L_2 (P_{{\mathcal X}_j})$, hence $L_{k_j} + \lambda I$ is invertible.

\begin{lemma}\label{intermediatefcn}
Let $j\in \{1, \ldots, s\}$ and $0<r \leq \frac{1}{2}$. Assume $f^*_j = L_{k_j}^r (g^*_j)$ for some $g^*_j \in L_2 (P_{{\mathcal X}_j})$.
Define an intermediate function $f_{j, \lambda}$ on ${\mathcal X}_j$ by
\begin{equation}\label{fjlambda}
 f_{j, \lambda} = (L_{k_j} + \lambda I)^{-1} L_{k_j} (f^*_j).
\end{equation}
Then we have
\begin{equation}\label{flambda-}
 \|f_{j, \lambda} -f^*_j \|^2_{L_2 (P_{{\mathcal X}_j})} + \lambda \|f_{j, \lambda}\|_{k_j}^2 \leq \lambda^{2 r} \|g^*_j\|^2_{L_2 (P_{{\mathcal X}_j})}.
\end{equation}
\end{lemma}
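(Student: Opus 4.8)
The plan is to diagonalize everything against the eigenbasis of $L_{k_j}$ and reduce the claim to a scalar inequality. Write $(\lambda_{j,\ell}, \psi_{j,\ell})_{\ell\in\N}$ for the normalized eigenpairs of $L_{k_j}$ on $L_2(P_{\mathcal{X}_j})$, expand $g^*_j = \sum_\ell c_{j,\ell}\psi_{j,\ell}$ with $\sum_\ell c_{j,\ell}^2 = \|g^*_j\|^2_{L_2(P_{\mathcal{X}_j})}$, and use the definition of the fractional power to get $f^*_j = L_{k_j}^r(g^*_j) = \sum_\ell \lambda_{j,\ell}^r c_{j,\ell}\psi_{j,\ell}$. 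Applying $(L_{k_j}+\lambda I)^{-1}L_{k_j}$, which acts on $\psi_{j,\ell}$ as multiplication by $\lambda_{j,\ell}/(\lambda_{j,\ell}+\lambda)$, yields $f_{j,\lambda} = \sum_\ell \frac{\lambda_{j,\ell}^{r+1}}{\lambda_{j,\ell}+\lambda} c_{j,\ell}\psi_{j,\ell}$ and hence $f_{j,\lambda}-f^*_j = -\sum_\ell \frac{\lambda\, \lambda_{j,\ell}^{r}}{\lambda_{j,\ell}+\lambda} c_{j,\ell}\psi_{j,\ell}$.

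Next I would compute the two terms on the left-hand side of (\ref{flambda-}) as series in $c_{j,\ell}^2$. Orthonormality of $(\psi_{j,\ell})$ in $L_2(P_{\mathcal{X}_j})$ gives $\|f_{j,\lambda}-f^*_j\|^2_{L_2(P_{\mathcal{X}_j})} = \sum_\ell \frac{\lambda^2\lambda_{j,\ell}^{2r}}{(\lambda_{j,\ell}+\lambda)^2}c_{j,\ell}^2$, while the standard identification of the RKHS norm of $H_j$ with the weighted norm $\|\sum_\ell a_\ell\psi_{j,\ell}\|_{k_j}^2 = \sum_{\lambda_{j,\ell}>0} a_\ell^2/\lambda_{j,\ell}$ (equivalently $\|\cdot\|_{k_j} = \|L_{k_j}^{-1/2}\,\cdot\,\|_{L_2(P_{\mathcal{X}_j})}$, see e.g. \cite{SZII, CuckerZhou2007}) gives $\|f_{j,\lambda}\|^2_{k_j} = \sum_\ell \frac{\lambda_{j,\ell}^{2r+1}}{(\lambda_{j,\ell}+\lambda)^2}c_{j,\ell}^2$; in particular this series is finite, so $f_{j,\lambda}\in H_j$. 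Adding the two and cancelling one factor $(\lambda_{j,\ell}+\lambda)$ from the denominator, one is left with the clean identity $\|f_{j,\lambda}-f^*_j\|^2_{L_2(P_{\mathcal{X}_j})}+\lambda\|f_{j,\lambda}\|^2_{k_j} = \sum_\ell \frac{\lambda\, \lambda_{j,\ell}^{2r}}{\lambda_{j,\ell}+\lambda}\,c_{j,\ell}^2$.

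It then remains to establish the scalar bound $\frac{\lambda\, \lambda_{j,\ell}^{2r}}{\lambda_{j,\ell}+\lambda}\leq \lambda^{2r}$ for every $\ell$, which rearranges to $\lambda^{1-2r}\lambda_{j,\ell}^{2r}\leq \lambda_{j,\ell}+\lambda$; this is exactly Young's inequality $a^{1-2r}b^{2r}\leq (1-2r)a + 2r b\leq a+b$ applied with $a=\lambda$ and $b=\lambda_{j,\ell}$, and it is precisely here that the hypothesis $0<r\leq\frac12$ (so that $2r\in(0,1]$ and the exponents are admissible convex weights) is used. Summing over $\ell$ and invoking $\sum_\ell c_{j,\ell}^2 = \|g^*_j\|^2_{L_2(P_{\mathcal{X}_j})}$ then gives (\ref{flambda-}).

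I do not expect a genuine obstacle here, since this is a variant of the classical source-condition estimate; the only point requiring a little care is the identification of the RKHS norm $\|\cdot\|_{k_j}$ with the $\lambda_{j,\ell}$-weighted $\ell_2$ norm on the eigenbasis, together with the bookkeeping over the possibly infinitely many and possibly vanishing eigenvalues (for a zero eigenvalue the corresponding $\psi_{j,\ell}$ simply does not appear in $f^*_j$ or $f_{j,\lambda}$, so such terms cause no difficulty). Once that is in place, the whole lemma collapses to the elementary Young inequality above.
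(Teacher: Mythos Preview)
Your proposal is correct and follows essentially the same spectral-decomposition approach as the paper: expand in the eigenbasis of $L_{k_j}$, compute both norms as series in the eigenvalues, and bound term by term. The only cosmetic difference is in the final scalar step---you first combine the two series into the single expression $\sum_\ell \frac{\lambda\,\lambda_{j,\ell}^{2r}}{\lambda_{j,\ell}+\lambda}c_{j,\ell}^2$ and then apply Young's inequality, whereas the paper factors out $\lambda^{2r}$ from each term separately and bounds the remaining factors by $\frac{\lambda}{\lambda_{j,\ell}+\lambda}$ and $\frac{\lambda_{j,\ell}}{\lambda_{j,\ell}+\lambda}$ via monotonicity of powers in $[0,1]$; both arguments use $0<r\le\tfrac12$ in exactly the same place.
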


\begin{proof}
If $\{(\lambda_i, \psi_i)\}_{i\ge 1}$ are the normalized eigenpairs of
the integral operator $L_{k_j}$, then the system
$\{\sqrt{\lambda_i} \psi_i: \lambda_i >0\}$ is orthogonal in $H_j$.

Write $g^*_j=\sum_{i\ge 1} d_i \psi_i$ with
$\|\{d_i\}\|_{\ell^2} =\|g^*_j\|_{L_2 (P_{{\mathcal X}_j})} <\infty$. Then $f^*_j
=\sum_{i\ge 1} \lambda_i^r  d_i \psi_i$ and
$$ f_{j, \lambda} -f^*_j =\bigl(L_{k_j} + \lambda I\big)^{-1} L_{k_j} (f^*_j) - f^*_j
=-\sum_{i\ge 1} \frac{\lambda}{\lambda_i + \lambda} \lambda_i^r  d_i \psi_i. $$ Hence
$$\|f_{j, \lambda} -f^*_j\|^2_{L_2 (P_{{\mathcal X}_j})} =\sum_{i\ge 1}
\biggl(\frac{\lambda}{\lambda_i + \lambda} \lambda_i^r d_i
 \biggr)^2 =\lambda^{2 r} \sum_{i\ge 1}
\biggl(\frac{\lambda}{\lambda_i + \lambda}\biggr)^{2(1-r)} \biggl(\frac{\lambda_i}{\lambda_i + \lambda}\biggr)^{2 r} d_i^2. $$
Also,
$$\|f_{j, \lambda}\|_{k_j}^2 = \left\|\sum_{i\ge 1} \frac{\lambda_i}{\lambda_i + \lambda} \lambda_i^r  d_i \psi_i\right\|_{k_j}^2
= \left\|\sum_{i\ge 1} \frac{\lambda_i^{\frac{1}{2} + r}}{\lambda_i + \lambda}  d_i \sqrt{\lambda_i} \psi_i\right\|_{k_j}^2 =
\sum_{i\ge 1} \frac{\lambda_i^{1 + 2 r}}{(\lambda_i + \lambda)^2}  d_i^2.  $$
Therefore, we have
\begin{eqnarray*}
&& \|f_{j, \lambda} -f^*_j \|^2_{L_2 (P_{{\mathcal X}_j})} + \lambda \|f_{j, \lambda}\|_{k_j}^2 \\
&=&\lambda^{2 r} \sum_{i\ge 1}
\left\{\biggl(\frac{\lambda}{\lambda_i + \lambda}\biggr)^{2(1-r)} \biggl(\frac{\lambda_i}{\lambda_i + \lambda}\biggr)^{2 r}
+ \biggl(\frac{\lambda}{\lambda_i + \lambda}\biggr)^{1-2r} \biggl(\frac{\lambda_i}{\lambda_i + \lambda}\biggr)^{1 + 2 r} \right\} d_i^2 \\
&\leq& \lambda^{2 r} \sum_{i\ge 1}
\left\{\frac{\lambda}{\lambda_i + \lambda}
+ \frac{\lambda_i}{\lambda_i + \lambda} \right\} d_i^2 = \lambda^{2 r} \|\{d_i\}\|^2_{\ell^2} = \lambda^{2 r} \|g^*_j\|^2_{L_2 (P_{{\mathcal X}_j})}.
\end{eqnarray*}
This proves the desired bound.
\end{proof}

\subsection{Proof of Theorem \ref{approxerrorThm}}

\proof[Proof of Theorem \ref{approxerrorThm}]
Observe that $f_{j, \lambda} \in H_j$. So $f_{1, \lambda} + \ldots + f_{s, \lambda} \in H$ and by the definition of the approximation error, we have
$$ {\mathcal D}(\lambda) \leq {\mathcal R}_{L^*, P} (f_{1, \lambda} + \ldots + f_{s, \lambda}) - {\mathcal R}^*_{L^*, P, {\mathcal F}} + \lambda \|f_{1, \lambda} + \ldots + f_{s, \lambda}\|_H^2. $$
But
$$ {\mathcal R}^*_{L^*, P, {\mathcal F}} =  {\mathcal R}_{L^*, P}(f^*_{{\mathcal F}, P}) ={\mathcal R}_{L^*, P}(f^*_1 + \ldots + f^*_s)$$
according to Assumption \ref{assumption1}.
Using the inequality in {(\ref{ACNormInequality})}, we obtain
$$  {\mathcal D}(\lambda)
\leq
{\mathcal R}_{L^*, P} (f_{1, \lambda} + \ldots + f_{s, \lambda})
- {\mathcal R}_{L^*, P}(f^*_1 + \ldots + f^*_s)
+ \lambda \sum_{j=1}^s \|f_{j, \lambda}\|^2_{H_j} .
$$
Applying the Lipschitz property (\ref{Lipsch}), the excess risk term can be estimated as
\begin{eqnarray*}
&& {\mathcal R}_{L^*, P} (f_{1, \lambda} + \ldots + f_{s, \lambda}) - {\mathcal R}_{L^*, P}(f^*_1 + \ldots + f^*_s) \\
&=& \int_{{\mathcal Z}} L^* \left(x, y, f_{1, \lambda}(x_1) + \ldots + f_{s, \lambda}(x_s)\right) d P (x, y) \\
&& - \int_{{\mathcal Z}} L^* \left(x, y, f^*_1 (x_1) + \ldots + f^*_s (x_s)\right) d P (x, y) \\
& \leq & \int_{{\mathcal Z}} |L|_1 \Bigl| \sum_{j=1}^s f_{j, \lambda}(x_j)
- \sum_{j=1}^s f^*_j (x_j) \Bigr| d P (x, y) \\
& \leq & |L|_1 \sum_{j=1}^s \int_{{\mathcal X}_j} \left|f_{j, \lambda}(x_j) - f^*_j (x_j)\right| d P_{{\mathcal X}_j} (x_j) \,.
\end{eqnarray*}
But
$$\int_{{\mathcal X}_j} \left|f_{j, \lambda}(x_j) - f^*_j (x_j)\right| d P_{{\mathcal X}_j} (x_j) = \|f_{j, \lambda} - f^*_j\|_{L_1 (P_{{\mathcal X}_j})} \leq  \|f_{j, \lambda} - f^*_j\|_{L_2 (P_{{\mathcal X}_j})}. $$
\noindent
The bound {(\ref{flambda-})} implies the following two inequalities
\begin{equation}\label{ProofThm1F1}
  \|f_{j, \lambda} -f^*_j \|^2_{L_2 (P_{{\mathcal X}_j})} \leq \lambda^{2 r} \|g^*_j\|^2_{L_2 (P_{{\mathcal X}_j})}
\end{equation}
and
\begin{equation}\label{ProofThm1F2}
  \lambda \|f_{j, \lambda}\|_{k_j}^2 \leq \lambda^{2 r} \|g^*_j\|^2_{L_2 (P_{{\mathcal X}_j})}.
\end{equation}
Taking square roots on both sides in {(\ref{ProofThm1F1})} yields
$$  {\mathcal D}(\lambda) \leq \sum_{j=1}^s \left(|L|_1 \|f_{j, \lambda} - f^*_j\|_{L_2 (P_{{\mathcal X}_j})} + \lambda \|f_{j, \lambda}\|^2_{H_j}\right).$$
This together with {(\ref{ProofThm1F2})} and Lemma \ref{intermediatefcn} gives
$$  {\mathcal D}(\lambda) \leq \sum_{j=1}^s \left\{|L|_1 \lambda^{r} \|g^*_j\|_{L_2 (P_{{\mathcal X}_j})} + \lambda^{2 r} \|g^*_j\|^2_{L_2 (P_{{\mathcal X}_j})}\right\} $$
and completes the proof of the statement.
\qed

\subsection{Proof of the assertion in Example \ref{GaussAdd}}\label{ProofSectionIntroduction}


\proof[Proof of Example \ref{GaussAdd}]
The function $f$ can be written as $f =f_1 +0$ where $f_1$ is a
function on ${\mathcal X}_1$ given by $f_1 (x_1, x'_1) =k_1 (x_1,
0) \in H_1$. So $f\in H$.

Now we prove (\ref{notinclude}). Assume to the contrary that $f
\in H_{k^{\Pi}}$. We apply a characterization of the RKHS
$H_{k^{\Pi}}$ given in \cite[Thm. 1]{Minh} as
\begin{equation}\label{charactGauss}
H_{k^{\Pi}} =\left\{f = e^{-\frac{\|x\|^2}{\sigma^2}}
\sum_{|\alpha| =0}^\infty w_\alpha x^\alpha: \|f\|_K^2 =
\sum_{\ell =0}^\infty \frac{\ell!}{(2/\sigma^2)^\ell}
\sum_{|\alpha| = \ell} \frac{w_\alpha^2}{C^\ell_\alpha} <
\infty\right\},
\end{equation}
where $\|x\|^2 = |x_1|^2 + |x_2|^2$ and $C^\ell_\alpha =
\frac{\ell!}{\alpha_1! \alpha_2!}$ for $\alpha = (\alpha_1,
\alpha_2) \in \ZZ_+^2$. Since $f \in H_{k^{\Pi}}$, we have
$$f\left(x_1, x_2\right) = \exp\left\{-\frac{|x_1|^2}{\sigma^2}\right\}
=e^{-\frac{|x_1|^2 + |x_2|^2}{\sigma^2}} \sum_{|\alpha| =0}^\infty
w_\alpha x^\alpha $$ where the coefficient sequence $\{w_\alpha:
\alpha \in \ZZ_+^2\}$ satisfies
$$ \|f\|_K^2 = \sum_{\ell =0}^\infty
\frac{\ell!}{(2/\sigma^2)^\ell} \sum_{|\alpha| = \ell}
\frac{w_\alpha^2}{C^\ell_\alpha} < \infty. $$ It follows that
$$\exp\left\{\frac{|x_2|^2}{\sigma^2}\right\} = \sum_{m=0}^\infty
\frac{1}{m!} \left(\frac{|x_2|^2}{\sigma^2}\right)^m
=\sum_{|\alpha| =0}^\infty w_\alpha x^\alpha. $$ Hence
$$ w_\alpha =\left\{\begin{array}{ll} \frac{1}{m! \sigma^{2 m}}, &
\hbox{if} \ \alpha =(0, 2m) \ \hbox{with} \ m\in\ZZ_+, \\
0, & \hbox{otherwise,} \end{array}\right. $$ and
$$ \|f\|_K^2 = \sum_{m =0}^\infty
\frac{(2 m)!}{(2/\sigma^2)^{2 m}} \frac{w_{(0, 2m)}^2}{C^{2
m}_{(0, 2m)}} = \sum_{m =0}^\infty \frac{(2 m)!}{(2/\sigma^2)^{2
m}} \left(\frac{1}{m! \sigma^{2 m}}\right)^2 = \sum_{m =0}^\infty
\frac{(2 m)!}{2^{2 m} (m!)^2}.
$$
Finally we apply the Stirling's approximation:
$$ \sqrt{2 \pi m} \left(\frac{m}{\pi}\right)^m \leq m! \leq
\frac{e}{\sqrt{2 \pi}} \sqrt{2 \pi m}
\left(\frac{m}{\pi}\right)^m,
$$
and find
$$ \|f\|_K^2 = \sum_{m =0}^\infty
\frac{(2 m)!}{2^{2 m} (m!)^2} \geq \sum_{m =0}^\infty
\frac{\sqrt{2 \pi (2 m)} \left(\frac{2 m}{\pi}\right)^{2 m}}{2^{2
m} \left(\frac{e}{\sqrt{2 \pi}} \sqrt{2 \pi m}
\left(\frac{m}{\pi}\right)^m\right)^2} =\sum_{m =0}^\infty
\frac{2\sqrt{\pi}}{e^2 \sqrt{m}} =\infty.
$$
This is a contradiction. Therefore, $f \not\in H_{k^{\Pi}}$. This
proves the conclusion in Example \ref{GaussAdd}.
\qed

\subsection{Sample error estimates}\label{SampleSection}

In this subsection we bound the sample error ${\mathcal S}$ defined
by (\ref{sampleErrorDef}) by Assumption \ref{assumption3}. It can first
be decomposed in two terms:
\begin{equation}\label{sampleDecom}
{\mathcal S} = {\mathcal S}_1 + {\mathcal S}_2,
\end{equation}
where
\begin{eqnarray}
{\mathcal S}_1 &=& \left\{{\mathcal R}_{L^*, P} (f_{L, {\mathbb
D}_n, \lambda}) -{\mathcal R}_{L^*, P} (f^*_{{\mathcal F},
P})\right\} \nonumber \\
&& \qquad - \left\{{\mathcal R}_{L^*, {\mathbb D}_n} (f_{L,
{\mathbb D}_n, \lambda}) - {\mathcal R}_{L^*, {\mathbb D}_n}
(f^*_{{\mathcal F}, P}) \right\}, \label{Sone} \\
{\mathcal S}_2 &=& \left\{{\mathcal R}_{L^*, {\mathbb D}_n} (f_{L,
P, \lambda}) - {\mathcal R}_{L^*, {\mathbb D}_n} (f^*_{{\mathcal
F}, P})\right\} \nonumber \\
&& \qquad - \left\{{\mathcal R}_{L^*, P} (f_{L, P, \lambda}) -
{\mathcal R}_{L^*, P} (f^*_{{\mathcal F}, P})\right\}.
\label{Stwo}
\end{eqnarray}

The second term ${\mathcal S}_2$ can be bounded easily by the Bernstein inequality.

\begin{lemma}\label{S2B}
Under Assumptions \ref{assumption1} and \ref{assumption3}, for any $0< \lambda \leq 1$ and $0< \delta <1$, with confidence $1-\frac{\delta}{2}$, we have
\begin{equation}\label{S2Bound}
{\mathcal S}_2 \leq C'_1 \log \frac{2}{\delta}
\max\left\{\frac{\lambda^{\frac{r-1}{2}}}{n},
\frac{\lambda^{\frac{r-1}{2} +
\frac{\theta(r+1)}{4}}}{\sqrt{n}}\right\},
\end{equation}
where $C'_1$ is a constant independent of $\delta, n$ or
$\lambda$ and given explicitly in the proof, see {(\ref{ACLemma3formula1})}.
\end{lemma}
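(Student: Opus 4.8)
The plan is to apply the one-sided Bernstein inequality to the random variable $\xi(x,y) = L^*(x,y,f_{L,P,\lambda}(x)) - L^*(x,y,f^*_{{\mathcal F},P}(x))$, whose expectation is ${\mathcal R}_{L^*,P}(f_{L,P,\lambda}) - {\mathcal R}_{L^*,P}(f^*_{{\mathcal F},P})$, so that ${\mathcal S}_2 = \frac{1}{n}\sum_{i=1}^n \xi(z_i) - \E\xi$ is exactly the quantity Bernstein controls. First I would bound $\|\xi\|_\infty$: by the Lipschitz property (\ref{Lipsch}), $|\xi(x,y)| \le |L|_1 |f_{L,P,\lambda}(x) - f^*_{{\mathcal F},P}(x)| \le |L|_1(\|f_{L,P,\lambda}\|_\infty + \|f^*_{{\mathcal F},P}\|_\infty)$. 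Using the reproducing property $\|g\|_\infty \le \kappa_H \|g\|_H$ together with the standard bound $\|f_{L,P,\lambda}\|_H^2 \le {\mathcal D}(\lambda)/\lambda \le C_r \lambda^{r-1}$ coming from (\ref{algor}) and Theorem \ref{approxerrorThm}, this gives $\|f_{L,P,\lambda}\|_\infty = O(\lambda^{(r-1)/2})$; since $f^*_{{\mathcal F},P} \in L_\infty$ by Assumption \ref{assumption1} and $\lambda \le 1$, the dominating term is $\lambda^{(r-1)/2}$, so $\|\xi\|_\infty \le B$ with $B = c\,\lambda^{(r-1)/2}$ for an explicit constant $c$.

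Next I would bound the variance $\E\xi^2$ using Assumption \ref{assumption3}: $\E\xi^2 \le c_\theta(1 + \|f_{L,P,\lambda}\|_\infty)^{2-\theta}\bigl({\mathcal R}_{L^*,P}(f_{L,P,\lambda}) - {\mathcal R}_{L^*,P}(f^*_{{\mathcal F},P})\bigr)^\theta$. The excess risk factor is itself bounded by ${\mathcal D}(\lambda) \le C_r\lambda^r$ (since $f_{L,P,\lambda}$ is the minimizer in (\ref{algor}), its excess risk is at most ${\mathcal D}(\lambda)$), and the $\|f_{L,P,\lambda}\|_\infty$ factor contributes $(1+\lambda^{(r-1)/2})^{2-\theta} = O(\lambda^{(r-1)(2-\theta)/2})$ for $\lambda\le 1$. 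Multiplying the exponents, $\E\xi^2 \le \sigma^2$ with $\sigma^2 = c'\,\lambda^{(r-1)(2-\theta)/2 + r\theta} = c'\,\lambda^{(r-1) + \theta(r+1)/2}$; note the exponent simplifies to $(r-1) + \frac{\theta(r+1)}{2}$, which is exactly twice the exponent appearing in the $\sqrt{n}$ term of (\ref{S2Bound}).

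Then Bernstein's inequality gives, with confidence $1 - \delta/2$,
\begin{equation*}
{\mathcal S}_2 \le \frac{2B\log(2/\delta)}{3n} + \sqrt{\frac{2\sigma^2\log(2/\delta)}{n}},
\end{equation*}
and substituting the values of $B$ and $\sigma^2$ and absorbing the constants, each of the two summands is a constant times $\log(2/\delta)$ times $\lambda^{(r-1)/2}/n$ and $\lambda^{(r-1)/2 + \theta(r+1)/4}/\sqrt{n}$ respectively; bounding their sum by twice the maximum yields (\ref{S2Bound}) with an explicit $C'_1$. The only mildly delicate point is tracking the constant $C'_1$ through the chain — the reproducing-kernel bound $\kappa_H$ (controlled by $\kappa$ from Assumption \ref{assumption2}), the constant $C_r$ from Theorem \ref{approxerrorThm}, $c_\theta$ from Assumption \ref{assumption3}, $|L|_1$, and $|L|_0$ — so that it is manifestly independent of $n$, $\delta$, $\lambda$; the probabilistic content is entirely standard. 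I expect the main obstacle to be purely bookkeeping: making sure the exponent arithmetic $(r-1)(2-\theta)/2 + r\theta = (r-1) + \theta(r+1)/2$ is done correctly and that the $\lambda\le 1$ monotonicity is used in the right direction when discarding the lower-order pieces of $\|f_{L,P,\lambda}\|_\infty$ and of $(1+\|f_{L,P,\lambda}\|_\infty)^{2-\theta}$.
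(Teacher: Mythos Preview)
Your proposal is correct and follows essentially the same route as the paper: define $\xi(x,y)=L^*(x,y,f_{L,P,\lambda}(x))-L^*(x,y,f^*_{{\mathcal F},P}(x))$, bound $\|\xi\|_\infty$ via the Lipschitz condition together with $\|f_{L,P,\lambda}\|_H^2\le {\mathcal D}(\lambda)/\lambda\le C_r\lambda^{r-1}$, bound $\E\xi^2$ via Assumption~\ref{assumption3} and the excess-risk estimate ${\mathcal D}(\lambda)\le C_r\lambda^r$, and then apply the one-sided Bernstein inequality. The exponent arithmetic you highlight is exactly what the paper does, and the resulting constant $C'_1$ in the paper depends on $|L|_1$, $\|f^*_{{\mathcal F},P}\|_{L_\infty}$, $\kappa$, $C_r$, and $c_\theta$ (not on $|L|_0$, which you can drop from your list).
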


\begin{proof}
Consider the random variable $\xi$ on $({\mathcal Z}, \mathcal{B}(\mathcal{Z}))$
defined by
$$ \xi (x, y) = L^* (x, y, f_{L, P, \lambda}(x)) - L^* (x, y, f^*_{{\mathcal
F}, P}(x)), \qquad z=(x, y) \in {\mathcal Z}. $$
Here $\mathcal{B}(\mathcal{Z})$ denotes the Borel-$\sigma$-algebra.
Recall our notation for the constant $\kappa:=\sum_{j=1}^s \sup_{x_j \in {\mathcal X}_j} \sqrt{k_j (x_j, x_j)} \geq \sqrt{\|k\|_\infty}$.
By Assumption \ref{assumption1} and Assumption \ref{assumption3}, $\|f^*_{{\mathcal F}, P}\|_{L_\infty (P_{{\mathcal X}})} < \infty$ and by Theorem \ref{approxerrorThm},
$$ \|f_{L, P, \lambda}\|_{L_\infty (P_{{\mathcal X}})} \leq \kappa \|f_{L, P, \lambda}\|_{H}
\leq \kappa \sqrt{{\mathcal D}(\lambda)/\lambda} \leq \kappa \sqrt{C_r} \lambda^{\frac{r-1}{2}} ~ < ~ \infty. $$
This in connection with the Lipschitz condition (\ref{Lipsch}) for $L$ tells us that the random
variable $\xi$ is bounded by
$$
B_\lambda := |L|_1 \left(\|f^*_{{\mathcal F}, P}\|_{L_\infty (P_{{\mathcal X}})} + \kappa \sqrt{C_r} \lambda^{\frac{r-1}{2}}\right).
$$

By Assumption \ref{assumption3}, we also know that its variance $\sigma^2 (\xi)$ can be bounded as
\begin{eqnarray*}
\sigma^2 (\xi) &\leq& \int_{\mathcal Z} \left(\xi (x, y)\right)^2 d P(x, y) \\
&\leq& c_\theta \left(1 + \|f_{L, P, \lambda}\|_{L_\infty (P_{{\mathcal X}})}\right)^{2-\theta}
\left\{{\mathcal R}_{L^*, P} (f_{L, P, \lambda})
-{\mathcal R}_{L^*, P} (f^*_{{\mathcal F}, P})\right\}^\theta \\
&\leq& c_\theta \left(1 + \kappa \sqrt{C_r} \lambda^{\frac{r-1}{2}}\right)^{2-\theta}
\left\{C_r \lambda^r \right\}^\theta \leq c_\theta \left(1 + \kappa \sqrt{C_r}\right)^{2-\theta}
C_r^\theta  \lambda^{r-1 + \frac{\theta(r+1)}{2}}.
\end{eqnarray*}
Now we apply the one-sided Bernstein inequality to $\xi$ which asserts that, for all $\epsilon>0$,
$$ \hbox{Prob} \biggl({1\over n}\sum_{i=1}^n \xi (z_i) - \E (\xi) > \epsilon \biggr) \le
\exp \biggl(-\frac{n \epsilon^2}{2\bigl(\sigma^2 (\xi) +
\frac{1}{3} B_\lambda \epsilon\bigr)}\biggr). $$
Solving the quadratic equation
$$\frac{n \epsilon^2}{2\bigl(\sigma^2 (\xi) +{1 \over 3} B_\lambda \epsilon\bigr)} = \log \frac{2}{\delta} $$ for
$\epsilon>0$, we see that with confidence $1-\frac{\delta}{2}$, we
have
\begin{eqnarray*} && {1\over n}\sum_{i=1}^n \xi (z_i) - \E (\xi)
\le \frac{{1 \over 3} B_\lambda \log \frac{2}{\delta} +
\sqrt{\bigl(\frac{1}{3} B_\lambda
\log \frac{2}{\delta} \bigr)^2 + 2 n \sigma^2 (\xi) \log \frac{2}{\delta}}}{n} \\
 && \qquad \leq \frac{2 B_\lambda \log \frac{2}{\delta}}{3n}  +\sqrt{\frac{2\log \frac{2}{\delta}}{n} \sigma^2 (\xi)}
 \leq C'_1 \log \frac{2}{\delta} \max\left\{\frac{\lambda^{\frac{r-1}{2}}}{n}, \frac{\lambda^{\frac{r-1}{2} + \frac{\theta(r+1)}{4}}}{\sqrt{n}}\right\},
\end{eqnarray*}
where $C'_1$ is the constant given by
\begin{equation}  \label{ACLemma3formula1}
 C'_1 = |L|_1 \left(\|f^*_{{\mathcal F}, P}\|_{L_\infty (P_{{\mathcal X}})} + \kappa \sqrt{C_r}\right) + \sqrt{2 c_\theta} \left(1 + \kappa \sqrt{C_r}\right)^{1-\frac{\theta}{2}}
C_r^{\frac{\theta}{2}}.
\end{equation}
But ${1\over n}\sum_{i=1}^n \xi (z_i) -
\E (\xi) = {\mathcal S}_2$. So our conclusion follows.
\end{proof}

The term ${\mathcal S}_1$ involves the function $f_{L, {\mathbb
D}_n, \lambda}$ which varies with the sample. Hence we need a concentration inequality to bound this term.
We shall do so by applying the following concentration inequality \cite{WuYingZhou} to the function set
\begin{equation}\label{hypothesisS}
 {\mathcal G} =\left\{L^* (x, y, f(x)) - L^* (x, y, f^*_{{\mathcal
F}, P}(x)): \ f\in H \mbox{~with~} \|f\|_H \leq R\right\}
\end{equation}
parameterized by the radius $R$ involving the $\ell_2$-empirical covering numbers of the function set.

\begin{prop}\label{concenProp}\cite[Prop.\,6]{WuYingZhou}
Let ${\mathcal G}$ be a set of measurable
functions on ${\mathcal Z}$, and $B, c>0, \theta \in [0, 1]$ be constants such
that each function $f\in {\mathcal G}$ satisfies $\|f\|_\infty \le B$
and $\E (f^2)\le c (\E f)^\theta$. If for some $a >0$ and $p\in (0, 2)$,
$$\sup_{\ell \in\NN} \sup_{{\bf z} \in {\mathcal Z}^\ell}\log {\cal N}_{2, {\bf z}}({\cal G}, \epsilon)\le
a\epsilon^{-p},\qquad \forall \epsilon >0, $$ then there
exists a constant $c_p'$ depending only on $p$ such that for any
$t>0$, with probability at least $1-e^{-t}$, there holds
$$ \E f- {1\over n}\sum_{i=1}^n
f(z_i)\le {1\over 2}\eta^{1-\theta}(\E f)^\theta+ c_p' \eta +
2\Big({c t \over n}\Big)^{1/(2-\theta)}+{18 B t \over n}, \qquad
\forall f\in{\cal G}, $$ where
$$\eta :=\max\bigg\{c^{2-p\over
4-2\theta +p\theta}\Big( \displaystyle{a\over n}\Big)^{2\over
4-2\theta+p\theta}, \ B^{2-p\over 2+p}\Big(\displaystyle { a\over
n}\Big)^{2\over 2+p}\bigg\}.$$
\end{prop}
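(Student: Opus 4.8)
The plan is to prove Proposition \ref{concenProp} by the classical route for one-sided uniform deviation bounds with a Bernstein-type variance assumption: symmetrization to an independent ghost sample, randomization by Rademacher signs, reduction of the supremum to a finite maximum over an $\ell_2$-empirical net, a sub-Gaussian tail (refined by the empirical variance) for each net element, and finally an optimization of the net radius, which is precisely the step that forces the critical scale $\eta$ to appear.

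\textbf{Step 1: symmetrization.} For $f \in \mathcal G$ put $\Delta(f) = \E f - \frac1n\sum_{i=1}^n f(z_i)$. Using $\E f^2 \le c(\E f)^\theta$ and $\|f\|_\infty \le B$ together with Chebyshev's inequality, one shows that whenever $\Delta(f)$ exceeds a threshold of the right order (depending on $c$, $B$, $n$, $t$), the same statistic evaluated on an independent ghost sample $\mathbf z' = (z_i')_{i\le n}$ is, with conditional probability at least $1/2$, no larger than $\frac1n\sum_{i=1}^n\bigl(f(z_i') - f(z_i)\bigr)$. This gives the standard comparison $\mathbb P\bigl(\sup_{f}\Delta(f) > u\bigr) \le 2\,\mathbb P\bigl(\sup_f \frac1n\sum_i(f(z_i') - f(z_i)) > u/2\bigr)$ on the relevant range of $u$, and one then inserts i.i.d. Rademacher signs $\varepsilon_i$ in front of $f(z_i') - f(z_i)$ in the usual way without changing the distribution of the symmetrized supremum.

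\textbf{Step 2: discretization, tail bound, and the choice of $\eta$.} Condition on the $2n$ points $(\mathbf z, \mathbf z')$. The hypothesis $\sup_\ell\sup_{\mathbf z}\log\mathcal N_{2,\mathbf z}(\mathcal G,\epsilon) \le a\epsilon^{-p}$ applies in particular to the empirical $\ell_2$-metric on these $2n$ points, so $\mathcal G$ admits an $\epsilon$-net $\{f_j\}_{j\le N}$ with $\log N \le a\epsilon^{-p}$; the cost of replacing $f$ by its nearest $f_j$ is at most $\epsilon$ in the relevant averages and is absorbed. For a fixed net element, $\frac1n\sum_i\varepsilon_i(f_j(z_i') - f_j(z_i))$ is conditionally sub-Gaussian with proxy variance governed by the empirical second moment of $f_j$, which is controlled by $c(\E f_j)^\theta$ up to lower-order terms; a one-sided Bernstein/Hoeffding estimate and a union bound over the net control the randomized supremum. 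To make the bound uniform in the size of $\E f$, peel $\mathcal G$ into the dyadic shells $\{2^{k-1}r < \E f \le 2^k r\}$ (finitely many since $\E f \le B$), apply the shell-wise estimate with variance proxy $c(2^k r)^\theta$, and sum the geometric series; the residual terms $2(ct/n)^{1/(2-\theta)}$ and $18Bt/n$ come from inverting the variance part and the boundedness part of the Bernstein bound. Balancing the entropy contribution $a\epsilon^{-p}$ against the exponents $n u^{2-\theta}/c$ (variance-driven regime) and $nu/B$ (boundedness-driven regime) forces the optimal scale $\eta = \max\{c^{(2-p)/(4-2\theta+p\theta)}(a/n)^{2/(4-2\theta+p\theta)},\ B^{(2-p)/(2+p)}(a/n)^{2/(2+p)}\}$, and the self-bounding term $\tfrac12\eta^{1-\theta}(\E f)^\theta$ is the algebraic consequence of splitting $u$ according to whether $\E f$ is below or above $\eta$.

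The main obstacle is Step 2: obtaining the \emph{local}, Bernstein-type conclusion with the self-bounding term $\tfrac12\eta^{1-\theta}(\E f)^\theta$ rather than a crude global Hoeffding bound. This requires (i) transferring the population variance bound $\E f^2 \le c(\E f)^\theta$ to the empirical second moment that controls the conditional sub-Gaussian constant after randomization, and (ii) carrying out the peeling over dyadic values of $\E f$ so that the geometric sum of the shell-wise tail bounds degrades neither the constants nor the exponent of $\eta$ — the interaction of the two regimes defining $\eta$ being exactly what makes this bookkeeping delicate. Everything else (symmetrization, Rademacher comparison, union bound over the net) is routine once the entropy hypothesis is in empirical $\ell_2$ form.
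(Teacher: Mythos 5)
First, a point of fact: the paper does not contain a proof of Proposition~\ref{concenProp}. It is imported verbatim from Wu, Ying and Zhou \cite{WuYingZhou} (their Proposition~6), as the citation in the theorem heading indicates, and no argument is given in the present text. So there is no in-paper proof to compare against; what can be assessed is whether your outline would actually deliver the stated bound.

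Your Step~1 (ghost-sample symmetrization and Rademacher randomization) and the idea in Step~2 of peeling over dyadic shells of $\E f$ to produce the self-bounding term $\tfrac12\eta^{1-\theta}(\E f)^\theta$ are both the right ingredients. The genuine gap is that Step~2 uses a \emph{single}-scale $\epsilon$-net followed by a union bound, and this cannot reproduce the stated critical scale $\eta$. Carrying the single-scale balance through in the variance-dominated regime, with $\sigma^2\sim c(\E f)^\theta$ and $\E f\sim u$, one sets the net-approximation error $\epsilon$ against the union-bound tail $\sigma\sqrt{a\epsilon^{-p}/n}$ and obtains $u\asymp (c\,a/n)^{1/(2+p-\theta)}$. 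The proposition instead claims $\eta\asymp c^{(2-p)/(4-2\theta+p\theta)}(a/n)^{2/(4-2\theta+p\theta)}$, and one checks that
$\frac{2}{4-2\theta+p\theta}-\frac{1}{2+p-\theta}=\frac{p(2-\theta)}{(4-2\theta+p\theta)(2+p-\theta)}>0$
whenever $p>0$; so the exponent on $a/n$ from a single net is strictly worse, i.e.\ your balance is lossy by a power of $n$. To reach $\eta$ one must chain: replace the one-shot net by nets at geometrically decreasing scales (equivalently, use the Dudley entropy integral $\int_0^\sigma\sqrt{a\,\epsilon^{-p}}\,d\epsilon=\tfrac{2\sqrt a}{2-p}\,\sigma^{1-p/2}$, which converges precisely because $p<2$), and solve the localization fixed point $\sqrt{a/n}\,\sigma^{1-p/2}\asymp u$ with $\sigma^2=c\,u^{\theta}$; this yields $u\asymp c^{(2-p)/(4-2\theta+p\theta)}(a/n)^{2/(4-2\theta+p\theta)}$, while the analogous balance with $\sigma^2\lesssim B\,u$ gives the $B^{(2-p)/(2+p)}(a/n)^{2/(2+p)}$ term of $\eta$. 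The $p$-dependent constant $c'_p$ in the statement (blowing up as $p\to2$) is a further signature of the chaining origin, arising from the factor $\tfrac{2}{2-p}$ or from the geometric sum over scales. So the missing idea is precisely the multi-scale (chaining or iterated-discretization) step replacing your one-shot discretization; everything else in your outline is sound.
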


\begin{lemma}\label{S1BR}
Under Assumptions \ref{assumption2} and \ref{assumption3}, for any $R \geq 1$, $0< \lambda \leq 1$ and $0< \delta <1$, with confidence $1-\frac{\delta}{2}$, we have
\begin{eqnarray}
&& \left\{{\mathcal R}_{L^*, P} (f) -{\mathcal R}_{L^*, P} (f^*_{{\mathcal F}, P})\right\} - \left\{{\mathcal R}_{L^*, {\mathbb D}_n} (f) - {\mathcal R}_{L^*, {\mathbb D}_n}
(f^*_{{\mathcal F}, P})\right\} \nonumber \\
&\leq& C'_2 R^{1-\theta} n^{-\frac{2(1-\theta)}{4-2\theta+ \zeta\theta}} \left({\mathcal R}_{L^*, P} (f) -{\mathcal R}_{L^*, P} (f^*_{{\mathcal F}, P})\right)^\theta \nonumber \\
&& + C''_2 \log \frac{2}{\delta} R n^{-\frac{2}{4-2\theta+
\zeta\theta}}, \qquad \forall \|f\|_H \leq R, \label{S2Bound}
\end{eqnarray}
where $C'_2, C''_2$ are constants independent of $R, \delta, n$ or
$\lambda$ and given explicitly in the proof. In particular, $C'_2
=\frac{1}{2}$ when $\theta =1$.
\end{lemma}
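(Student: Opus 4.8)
The plan is to apply the concentration inequality of Proposition \ref{concenProp} to the function class $\mathcal{G}$ defined in (\ref{hypothesisS}) and then to read off the rate. First observe that if $g\in\mathcal{G}$ is the function associated to some $f$ with $\|f\|_H\le R$, then $\E g={\mathcal R}_{L^*, P}(f)-{\mathcal R}_{L^*, P}(f^*_{{\mathcal F}, P})$ and $\frac1n\sum_{i=1}^n g(z_i)={\mathcal R}_{L^*, {\mathbb D}_n}(f)-{\mathcal R}_{L^*, {\mathbb D}_n}(f^*_{{\mathcal F}, P})$, so the left-hand side of (\ref{S2Bound}) is exactly $\E g-\frac1n\sum_{i=1}^n g(z_i)$, and the asserted inequality is a bound holding simultaneously over all of $\mathcal{G}$.

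Next I would verify the three hypotheses of Proposition \ref{concenProp}, keeping track of the dependence on $R$. Using $\|f\|_\infty\le\kappa\|f\|_H\le\kappa R$ and the fact that $L^*$ has the same Lipschitz constant $|L|_1$ as $L$ by (\ref{Lipsch}), every $g\in\mathcal{G}$ satisfies $\|g\|_\infty\le B:=|L|_1\bigl(\kappa R+\|f^*_{{\mathcal F}, P}\|_{L_\infty(P_{\mathcal X})}\bigr)$, which is $O(R)$ since $R\ge1$. Because $H\subset\mathcal{F}$, Assumption \ref{assumption3} gives $\E(g^2)\le c\,(\E g)^\theta$ with $c:=c_\theta(1+\kappa R)^{2-\theta}=O(R^{2-\theta})$. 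Finally, any $(\epsilon/|L|_1)$-net of the ball $\{\|f\|_H\le R\}$ in an empirical metric induces, via $f\mapsto L^*(\cdot,\cdot,f(\cdot))-L^*(\cdot,\cdot,f^*_{{\mathcal F}, P}(\cdot))$, an $\epsilon$-net of $\mathcal{G}$, so Theorem \ref{capacityThm} yields $\log{\mathcal N}_{2,{\bf z}}(\mathcal{G},\epsilon)\le s^{1+\zeta}c_\zeta(|L|_1R/\epsilon)^\zeta=:a\,\epsilon^{-\zeta}$ with $a=O(R^\zeta)$ and exponent $p=\zeta\in(0,2)$.

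Applying Proposition \ref{concenProp} with $t=\log\frac2\delta$ then gives, with probability at least $1-\frac\delta2$,
$$ \E g-\frac1n\sum_{i=1}^n g(z_i)\le\frac12\,\eta^{1-\theta}(\E g)^\theta+c'_p\,\eta+2\Bigl(\frac{ct}{n}\Bigr)^{1/(2-\theta)}+\frac{18Bt}{n}, $$
and the heart of the matter is to simplify $\eta=\max\bigl\{c^{\frac{2-\zeta}{4-2\theta+\zeta\theta}}(a/n)^{\frac{2}{4-2\theta+\zeta\theta}},\,B^{\frac{2-\zeta}{2+\zeta}}(a/n)^{\frac{2}{2+\zeta}}\bigr\}$. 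Substituting the orders of $c,a,B$, the exponent of $R$ in the first branch equals $(2-\theta)\frac{2-\zeta}{4-2\theta+\zeta\theta}+\frac{2\zeta}{4-2\theta+\zeta\theta}=1$ and in the second equals $\frac{2-\zeta}{2+\zeta}+\frac{2\zeta}{2+\zeta}=1$; moreover $(4-2\theta+\zeta\theta)-(2+\zeta)=(1-\theta)(2-\zeta)\ge0$, so $n^{-2/(2+\zeta)}\le n^{-2/(4-2\theta+\zeta\theta)}$ and hence $\eta\le C\,R\,n^{-2/(4-2\theta+\zeta\theta)}$ for an explicit constant $C$. Substituting this into the first term produces $C'_2 R^{1-\theta}n^{-2(1-\theta)/(4-2\theta+\zeta\theta)}(\E g)^\theta$ with $C'_2=\frac12 C^{1-\theta}$ (so $C'_2=\frac12$ when $\theta=1$, since then $\eta^{1-\theta}=1$); for the remaining three terms one uses $\theta\in[0,1]$ and $\zeta\ge0$ to obtain $\frac1{2-\theta}\ge\frac2{4-2\theta+\zeta\theta}$ and $1\ge\frac2{4-2\theta+\zeta\theta}$, so that $c'_p\eta$, $2(ct/n)^{1/(2-\theta)}$ and $18Bt/n$ are each at most $C''_2\log\frac2\delta\,R\,n^{-2/(4-2\theta+\zeta\theta)}$ for a suitable $C''_2$, which is (\ref{S2Bound}).

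The main obstacle is precisely this last bookkeeping: checking that the power of $R$ in $\eta$ collapses to $1$ and that every lower-order term in Proposition \ref{concenProp} is absorbed into the single rate $n^{-2/(4-2\theta+\zeta\theta)}$, both of which rest on the ranges $\theta\in[0,1]$ and $\zeta\in(0,2)$. A minor technical nuisance is that $(ct/n)^{1/(2-\theta)}$ depends on $t$ only sublinearly, so slight care is needed to present the final bound with a clean $\log\frac2\delta$ prefactor; since $0<\delta<1$ this causes no real difficulty.
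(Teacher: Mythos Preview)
Your proposal is correct and follows essentially the same route as the paper: apply Proposition~\ref{concenProp} to the class $\mathcal{G}$ of (\ref{hypothesisS}), verify the sup-norm, variance, and covering-number hypotheses via the Lipschitz property, Assumption~\ref{assumption3}, and Theorem~\ref{capacityThm}, then use $4-2\theta+\zeta\theta\ge 2+\zeta$ to collapse $\eta$ to $O\bigl(R\,n^{-2/(4-2\theta+\zeta\theta)}\bigr)$. Your explicit check that the $R$-exponent equals $1$ in both branches of $\eta$ is a bit more detailed than the paper's version, and your remark about the sublinear $t$-dependence of $(ct/n)^{1/(2-\theta)}$ is handled in the paper simply by absorbing the bounded factor $t^{1/(2-\theta)}/t\le (\log 2)^{-(1-\theta)/(2-\theta)}$ into $C''_2$.
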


\begin{proof}
Consider the function set ${\mathcal G}$ defined by (\ref{hypothesisS}). Each function takes the form
$g(x, y) = L^* (x, y, f(x)) - L^* (x, y, f^*_{{\mathcal F}, P}(x))$ with $\|f\|_H \leq R$. It satisfies
$$ \|g\|_\infty \leq |L|_1 \|f- f^*_{{\mathcal F}, P}\|_\infty \leq |L|_1 \left(\kappa + \|f^*_{{\mathcal F}, P}\|_{L_\infty (P_{{\mathcal X}})}\right) R =: B $$
and by Assumption \ref{assumption3} and the condition $R \geq 1$,
$$ \E (g^2) \leq (1+ \kappa)^{2-\theta} c_\theta R^{2 - \theta} (\E g)^\theta. $$
Moreover, the Lipschitz  property (\ref{Lipsch}) and Lemma \ref{capacityThm} imply that for any $\epsilon >0$ there holds
$$\sup_{\ell \in\NN} \sup_{{\bf z} \in {\mathcal Z}^\ell}\log {\cal N}_{2, {\bf z}}({\cal G}, \epsilon) \le \log {\cal N}\left(\{f\in H: \|f\|_{H} \leq R\}, \frac{\epsilon}{|L|_1}\right)
\leq s c_\zeta \left(\frac{s |L|_1 R}{\epsilon}\right)^\zeta. $$
Thus all the conditions of Proposition \ref{concenProp} are satisfied with $p=\zeta$ and we see that with confidence at least $1-\frac{\delta}{2}$, there holds
\begin{eqnarray}
&& \E g - {1\over n}\sum_{i=1}^n g (z_i)\le {1\over
2}\eta^{1-\theta}(\E g)^\theta+ c'_\zeta \eta +
2\Big({c \log(2/\delta) \over n}\Big)^{1/(2-\theta)} \nonumber \\
&& \quad +{18 B \log(2/\delta) \over n},  \qquad \forall g\in{\cal
G}, \label{concenapply}
\end{eqnarray}
where $c= (1+ \kappa)^{2-\theta} c_\theta R^{2 - \theta}$, $a= s c_\zeta \left(s |L|_1 R\right)^\zeta$ and
$$\eta =\max\bigg\{c^{2- \zeta \over
4-2\theta +\zeta\theta}\Big( \displaystyle{a\over n}\Big)^{2\over
4-2\theta+ \zeta\theta}, \ B^{2- \zeta \over 2+
\zeta}\Big(\displaystyle { a\over n}\Big)^{2\over 2+
\zeta}\bigg\}.$$ But
$$\E g = {\mathcal R}_{L^*, P} (f) -{\mathcal R}_{L^*, P} (f^*_{{\mathcal F},
P}) $$
and
$$ {1\over n}\sum_{i=1}^n
g (z_i) = {\mathcal R}_{L^*, {\mathbb D}_n} (f) - {\mathcal
R}_{L^*, {\mathbb D}_n} (f^*_{{\mathcal F}, P}). $$ Notice from
the inequality $4-2\theta+ \zeta\theta \geq 2+ \zeta$ that
$$ \eta \leq C'_3 R n^{-\frac{2}{4-2\theta+ \zeta\theta}}, $$
where $C'_3$ is the constant given by
\begin{eqnarray*} C'_3 :&=& ((1+ \kappa)^{2-\theta} c_\theta)^{2- \zeta \over
4-2\theta +\zeta\theta} \left(s c_\zeta (s |L|_1)^\zeta\right)^{2\over
4-2\theta+ \zeta\theta} \\
&& + \left(|L|_1 \left(\kappa + \|f^*_{{\mathcal F}, P}\|_{L_\infty (P_{{\mathcal X}})}\right)\right)^{2- \zeta \over 2+ \zeta} \left(s c_\zeta (s |L|_1)^\zeta\right)^{2\over 2+ \zeta}.
\end{eqnarray*}
Then our desired bound holds true with the constants given by
$$ C''_2 = \max\left\{{1\over 2} (C'_3)^{1-\theta}, c'_\zeta C'_3, 2(1+ \kappa) (c_\theta)^{1/(2-\theta)} + 18 |L|_1 \left(\kappa + \|f^*_{{\mathcal F}, P}\|_{L_\infty (P_{{\mathcal X}})}\right)\right\} $$
and
$$ C'_2 = \left\{\begin{array}{ll} C''_2, & \hbox{if} \ 0\leq \theta <1, \\ \frac{1}{2}, & \hbox{if} \ \theta =1. \end{array}\right. $$
Here the case $\theta =1$ can be seen directly from (\ref{concenapply}).
This completes the proof.
\end{proof}

Combining all the above results yields the following error bounds. For $R \geq 1$, we denote a sample set
\begin{equation}\label{WRset}
{\mathcal W}(R) = \left\{{\bf z} \in {\mathcal Z}^n: \|f_{L,
{\mathbb D}_n, \lambda}\|_H \leq R\right\}.
\end{equation}

\begin{prop}\label{generalThm}
Under Assumptions \ref{assumption1} to \ref{assumption3}, let $R \geq 1$, $0< \lambda \leq 1$
and $0< \delta <1$. Then there exists a subset ${\mathcal V}_R$ of
${\mathcal Z}^n$ with probability at most $\delta$ such that
\begin{eqnarray}
&& {\mathcal R}_{L^*, P} (f_{L, {\mathbb D}_n, \lambda}) - {\mathcal
R}^*_{L^*, P, {\mathcal F}} + \lambda \|f_{L, {\mathbb D}_n,
\lambda}\|^2_H \nonumber \\
&& \leq C'_2 R^{1-\theta} n^{-\frac{2(1-\theta)}{4-2\theta+ \zeta\theta}} \left({\mathcal R}_{L^*, P} (f_{L, {\mathbb D}_n, \lambda}) -{\mathcal R}_{L^*, P} (f^*_{{\mathcal F}, P})\right)^\theta \nonumber \\
&& \quad  + C'_1 \log \frac{2}{\delta} \max\left\{\frac{\lambda^{\frac{r-1}{2}}}{n}, \frac{\lambda^{\frac{r-1}{2} + \frac{\theta(r+1)}{4}}}{\sqrt{n}}\right\} \nonumber \\
&& \quad  + C''_2 \log \frac{2}{\delta} R n^{-\frac{2}{4-2\theta+
\zeta\theta}} + C_r \lambda^{r}, \qquad \forall {\bf z} \in
{\mathcal W}(R)\setminus {\mathcal V}_R. \label{GeneralB}
\end{eqnarray}
\end{prop}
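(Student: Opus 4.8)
The plan is to assemble the inequality (\ref{GeneralB}) by chaining together the three estimates already proved --- the deterministic error decomposition, the approximation error bound, and the two sample error bounds --- and then to control the total probability of failure by a union bound.

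First I would apply Lemma \ref{ErrorDecomThm}, a deterministic inequality valid for every sample ${\bf z} \in {\mathcal Z}^n$, together with the splitting ${\mathcal S} = {\mathcal S}_1 + {\mathcal S}_2$ of (\ref{sampleDecom}), to obtain
$$ {\mathcal R}_{L^*, P} (f_{L, {\mathbb D}_n, \lambda}) - {\mathcal R}^*_{L^*, P, {\mathcal F}} + \lambda \|f_{L, {\mathbb D}_n, \lambda}\|^2_H \leq {\mathcal S}_1 + {\mathcal S}_2 + {\mathcal D}(\lambda), $$
where ${\mathcal S}_1, {\mathcal S}_2$ are as in (\ref{Sone})--(\ref{Stwo}). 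Since $0 < \lambda \leq 1$, Theorem \ref{approxerrorThm} bounds the purely deterministic term by ${\mathcal D}(\lambda) \leq C_r \lambda^{r}$, which accounts for the last summand in (\ref{GeneralB}).

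Next I would treat the two random terms separately. Lemma \ref{S2B} yields a subset of ${\mathcal Z}^n$ of probability at most $\delta/2$ outside of which ${\mathcal S}_2$ is dominated by $C'_1 \log \frac{2}{\delta} \max\{\lambda^{\frac{r-1}{2}}/n,\, \lambda^{\frac{r-1}{2}+\frac{\theta(r+1)}{4}}/\sqrt{n}\}$; note this estimate involves neither $R$ nor the set ${\mathcal W}(R)$. For ${\mathcal S}_1$ the key observation is that, by (\ref{Sone}), ${\mathcal S}_1$ is exactly the left-hand side of the inequality in Lemma \ref{S1BR} evaluated at $f = f_{L, {\mathbb D}_n, \lambda}$. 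Lemma \ref{S1BR} provides a subset of ${\mathcal Z}^n$ of probability at most $\delta/2$ off which its inequality holds \emph{simultaneously} for all $f$ with $\|f\|_H \leq R$; hence on ${\mathcal W}(R)$, where by the definition (\ref{WRset}) one has $\|f_{L, {\mathbb D}_n, \lambda}\|_H \leq R$, we are entitled to substitute $f = f_{L, {\mathbb D}_n, \lambda}$ and bound ${\mathcal S}_1$ by the $C'_2$- and $C''_2$-terms of (\ref{GeneralB}).

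Finally I would set ${\mathcal V}_R$ to be the union of the two exceptional sets just described; by the union bound it has probability at most $\delta$. Adding the bound on ${\mathcal S}_1$, the bound on ${\mathcal S}_2$, and $C_r \lambda^{r}$ then gives (\ref{GeneralB}) for every ${\bf z} \in {\mathcal W}(R) \setminus {\mathcal V}_R$. There is no real obstacle in this argument --- it is essentially bookkeeping --- but one point deserves care: Lemma \ref{S1BR} must be applied \emph{uniformly} over the ball $\{\|f\|_H \leq R\}$ before the data-dependent estimator $f_{L, {\mathbb D}_n, \lambda}$ can be inserted, which is precisely why the restriction to ${\mathcal W}(R)$ enters and why the conclusion is asserted only on ${\mathcal W}(R) \setminus {\mathcal V}_R$ and not on all of ${\mathcal Z}^n$.
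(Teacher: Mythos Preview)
Your proposal is correct and matches the paper's approach exactly: the paper simply states that ``combining all the above results yields the following error bounds'' and presents Proposition~\ref{generalThm} without a separate written proof, relying precisely on the chain Lemma~\ref{ErrorDecomThm} $+$ (\ref{sampleDecom}) $+$ Theorem~\ref{approxerrorThm} $+$ Lemma~\ref{S2B} $+$ Lemma~\ref{S1BR} $+$ union bound that you spell out. Your remark about the need for uniformity in Lemma~\ref{S1BR} before substituting the data-dependent $f_{L,{\mathbb D}_n,\lambda}$ is exactly the reason the restriction to ${\mathcal W}(R)$ appears.
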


To apply the above analysis we need a radius $R$ which bounds the norm of the function $f_{L, {\mathbb
D}_n, \lambda}$.

\begin{lemma}\label{initialB}
If $L(x, y, 0)$ is bounded by a constant $|L|_0$ almost surely, then we have almost surely
$$ \|f_{L, {\mathbb D}_n, \lambda}\|_H \leq \sqrt{|L|_0/\lambda}. $$
\end{lemma}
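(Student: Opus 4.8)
The plan is to exploit the defining variational property of $f_{L, {\mathbb D}_n, \lambda}$ by comparing it against the trivial competitor $f \equiv 0$. Since $f_{L, {\mathbb D}_n, \lambda}$ minimizes $g \mapsto {\mathcal R}_{L^*, {\mathbb D}_n}(g) + \lambda \|g\|_H^2$ over $H$, and $0 \in H$, we get
$$
{\mathcal R}_{L^*, {\mathbb D}_n}(f_{L, {\mathbb D}_n, \lambda}) + \lambda \|f_{L, {\mathbb D}_n, \lambda}\|_H^2 \leq {\mathcal R}_{L^*, {\mathbb D}_n}(0) + \lambda \|0\|_H^2 = {\mathcal R}_{L^*, {\mathbb D}_n}(0).
$$
The first step is then to observe that the right-hand side vanishes: by definition $L^*(x, y, 0) = L(x, y, 0) - L(x, y, 0) = 0$, so ${\mathcal R}_{L^*, {\mathbb D}_n}(0) = \frac{1}{n}\sum_{i=1}^n L^*(x_i, y_i, 0) = 0$.

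The second step is to bound ${\mathcal R}_{L^*, {\mathbb D}_n}(f_{L, {\mathbb D}_n, \lambda})$ from below. Using $L \geq 0$ together with the hypothesis $L(x, y, 0) \leq |L|_0$ almost surely, we have pointwise (on the sample, almost surely) $L^*(x_i, y_i, f_{L, {\mathbb D}_n, \lambda}(x_i)) = L(x_i, y_i, f_{L, {\mathbb D}_n, \lambda}(x_i)) - L(x_i, y_i, 0) \geq - L(x_i, y_i, 0) \geq -|L|_0$, hence ${\mathcal R}_{L^*, {\mathbb D}_n}(f_{L, {\mathbb D}_n, \lambda}) \geq -|L|_0$. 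Combining with the displayed inequality gives $\lambda \|f_{L, {\mathbb D}_n, \lambda}\|_H^2 \leq -{\mathcal R}_{L^*, {\mathbb D}_n}(f_{L, {\mathbb D}_n, \lambda}) \leq |L|_0$, and dividing by $\lambda > 0$ and taking square roots yields the claim.

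This argument is essentially a one-line application of the minimizing property, so there is no real obstacle; the only point requiring a modicum of care is keeping track of the shift in $L^*$ — both that the constant (unregularized, $f=0$) competitor has zero shifted empirical risk and that the shifted loss is bounded below by $-|L|_0$ rather than $0$. Everything holds on the almost-sure event where $L(\cdot,\cdot,0) \leq |L|_0$, which matches the "almost surely" in the statement.
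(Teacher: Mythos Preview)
Your proof is correct and essentially identical to the paper's: compare $f_{L,{\mathbb D}_n,\lambda}$ against the zero function to get ${\mathcal R}_{L^*,{\mathbb D}_n}(f_{L,{\mathbb D}_n,\lambda}) + \lambda\|f_{L,{\mathbb D}_n,\lambda}\|_H^2 \le 0$, then use $L\ge 0$ and $L(x,y,0)\le |L|_0$ to bound $-{\mathcal R}_{L^*,{\mathbb D}_n}(f_{L,{\mathbb D}_n,\lambda}) \le \frac{1}{n}\sum_i L(x_i,y_i,0) \le |L|_0$. Your write-up is in fact slightly more explicit than the paper's in spelling out why $L^*(x,y,0)=0$ and why the shifted loss is bounded below by $-|L|_0$.
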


\begin{proof} By the definition of the function $f_{L, {\mathbb
D}_n, \lambda}$, we have
$$ {\mathcal R}_{L^*, {\mathbb D}_n} (f_{L,
{\mathbb D}_n, \lambda}) + \lambda \|f_{L, {\mathbb D}_n, \lambda}\|_H^2 \leq {\mathcal R}_{L^*, {\mathbb D}_n} (0) + \lambda \|0\|_H^2 =0. $$
Hence we have almost surely
$$ \lambda \|f_{L, {\mathbb D}_n, \lambda}\|_H^2 \leq -{\mathcal R}_{L^*, {\mathbb D}_n} (f_{L,
{\mathbb D}_n, \lambda}) \leq \frac{1}{n} \sum_{i=1}^n L(x_i, y_i,
0) \leq |L|_0. $$ Then our desired bound follows.
\end{proof}

Applying Proposition \ref{generalThm} to $R= \sqrt{|L|_0/\lambda}$ gives a learning rate.
But we can do better by an iteration technique.
However, we will first give the proof of Theorem \ref{capacityThm}.

\subsection{Proofs of the main results in Section \ref{RatesSection}}

\proof[Proof of Theorem \ref{capacityThm}] By the definition of
the $\ell_2$-empirical covering number, for every $j\in \{1,
\ldots, s\}$ and ${\bf x}^{(j)} \in ({\mathcal X}_j)^n$, there
exists a set of functions $\{f^{(j)}_i: i=1, \ldots, {\mathcal
N}^{(j)}\}$ with ${\mathcal N}^{(j)} = {\cal N}\left(\{f\in H_j:
\|f\|_{H_j} \leq 1\}, \epsilon\right)$ such that for every
$f^{(j)} \in H_j$ with $\|f^{(j)}\|_{H_j} \leq 1$ we can find some
$i_j \in \{1, \ldots, {\mathcal N}^{(j)}\}$ satisfying $d_{2, {\bf
x}^{(j)}}(f^{(j)}, f^{(j)}_{i_j}) \leq \epsilon$.

Now every function $f\in H$ with $\|f\|_{H} \leq 1$ can be written
as $f= f^{(1)} + \ldots + f^{(s)}$ with $\|f^{(j)}\|_{H_j} \leq
1$. Also, every ${\bf x} =(x_\ell)_{\ell=1}^n \in ({\mathcal
X})^n$ can be expressed as $x_\ell = \left(x^{(1)}_\ell, \ldots,
x^{(s)}_\ell\right)$ with ${\bf x}^{(j)}= (x^{(j)}_\ell)_{\ell
=1}^n \in ({\mathcal X}_j)^n$. By taking the function $f_{i_1,
\ldots, i_s} = f^{(1)}_{i_1} + \ldots + f^{(s)}_{i_s}$, we see
that
\begin{eqnarray*}
d_{2, {\bf x}}\left(f, f_{i_1, \ldots, i_s}\right) &=& \left\{\frac{1}{n}\sum_{\ell=1}^n\big(f(x_\ell)-f_{i_1, \ldots, i_s} (x_\ell)\big)^2\right\}^{1/2} \\
&=& \Big\{\frac{1}{n}\sum_{\ell=1}^n\Big(\left(f^{(1)}(x^{(1)}_\ell) + \ldots + f^{(s)}(x^{(s)}_\ell)\right) \\
&& - \left(f^{(1)}_{i_1}(x^{(1)}_\ell) + \ldots + f^{(s)}_{i_s}(x^{(s)}_\ell)\right)\Big)^2\Big\}^{1/2} \\
&\leq& \sum_{j=1}^s
\Big\{\frac{1}{n}\sum_{\ell=1}^n\Big(f^{(j)}(x^{(j)}_\ell)
- f^{(j)}_{i_j}(x^{(j)}_\ell)\Big)^2\Big\}^{1/2} \\
&=& \sum_{j=1}^s d_{2, {\bf x}^{(j)}} \left(f^{(j)}, f^{(j)}_{i_j}\right) \leq s \epsilon.
\end{eqnarray*}
The number of functions of the form $f_{i_1, \ldots, i_s}$ is $\Pi_{j=1}^s {\mathcal N}^{(j)}$.
Therefore,
\begin{eqnarray*}
\log {\cal N}\left(\{f\in H: \|f\|_{H} \leq 1\}, s \epsilon\right)
& \leq & \sum_{j=1}^s \log {\cal N}\left(\{f\in H_j: \|f\|_{H_j} \leq 1\}, \epsilon\right) \\
& \leq & s c_\zeta \left(\frac{1}{\epsilon}\right)^\zeta.
\end{eqnarray*}
Then our desired statement follows by scaling $R$ to $1$.
\qed

We are now in a position to prove our main results stated in Section \ref{RatesSection}. Theorem
\ref{mainratesThm} is proved by applying Proposition \ref{generalThm}
iteratively. The iteration technique for analyzing regularization schemes has been well developed in the literature
\cite{SteinwartScovel, WuYingZhou, Hu, HuFanWuZhou}.

\proof[Proof of Theorem \ref{mainratesThm}] Take $R^{[0]} =
\max\{\sqrt{|L|_0}, 1\} \frac{1}{\sqrt{\lambda}}$. Lemma
\ref{initialB} tells us that ${\mathcal W}(R^{[0]}) ={\mathcal
Z}^n$. We apply an iteration technique with a sequence of radii
$\{R^{[\ell]} \geq 1\}_{\ell \in \NN}$ to be defined below.

Apply Proposition \ref{generalThm} to $R= R^{[\ell]}$, and when $0\leq \theta <1$, apply the elementary inequality
$${1 \over q} + {1 \over q^*} =1 \ \hbox{with} \ q, q^* >1
\Longrightarrow a \cdot b \le {1 \over q} a^q + {1 \over q^*}
b^{q^*}, \qquad \forall a, b \ge 0 $$ with $q={1 \over \theta},
q^* ={1 \over 1-\theta}$ and
$$a=2^{-\theta} \left({\mathcal R}_{L^*, P} (f_{L, {\mathbb D}_n, \lambda}) -{\mathcal R}_{L^*, P} (f^*_{{\mathcal F}, P})\right)^\theta, \quad b=2^{\theta} C'_2 R^{1-\theta} n^{-\frac{2(1-\theta)}{4-2\theta+ \zeta\theta}}. $$
We know that there exists a subset ${\mathcal V}_{R^{[\ell]}}$ of
${\mathcal Z}^n$ with measure at most $\delta$ such that
\begin{eqnarray*}
&& {\mathcal R}_{L^*, P} (f_{L, {\mathbb D}_n, \lambda}) - {\mathcal
R}^*_{L^*, P, {\mathcal F}} + \lambda \|f_{L, {\mathbb D}_n,
\lambda}\|^2_H  \\
&& \leq \frac{1}{2} \left\{{\mathcal R}_{L^*, P} (f_{L, {\mathbb D}_n, \lambda}) -{\mathcal R}_{L^*, P} (f^*_{{\mathcal F}, P})\right\} \\
&& \quad + \left(2^{\theta} C'_2\right)^{\frac{1}{1-\theta}} R^{[\ell]} n^{-\frac{2}{4-2\theta+ \zeta\theta}}   \\
&& \quad  + C'_1 \log \frac{2}{\delta} \max\left\{\frac{\lambda^{\frac{r-1}{2}}}{n}, \frac{\lambda^{\frac{r-1}{2} + \frac{\theta(r+1)}{4}}}{\sqrt{n}}\right\}  \\
&& \quad  + C''_2 \log \frac{2}{\delta} R^{[\ell]}
n^{-\frac{2}{4-2\theta+ \zeta\theta}} + C_r \lambda^{r}, \qquad
\forall {\bf z} \in {\mathcal W}(R^{[\ell]})\setminus {\mathcal
V}_{R^{[\ell]}}.
\end{eqnarray*}
It follows that when $\lambda = n^{-\beta}$ for some $\beta >0$,
we have
\begin{equation}\label{iterGeneral}
 {\mathcal R}_{L^*, P} (f_{L, {\mathbb D}_n, \lambda}) - {\mathcal
R}^*_{L^*, P, {\mathcal F}} + \lambda \|f_{L, {\mathbb D}_n,
\lambda}\|^2_H \leq \max\left\{a_{n, \delta} R^{[\ell]}, b_{n,
\delta}\right\}, \ \forall {\bf z} \in {\mathcal
W}(R^{[\ell]})\setminus {\mathcal V}_{R^{[\ell]}},
\end{equation} where
$$ a_{n, \delta} := \left\{4 \left(2^{\theta} C'_2\right)^{\frac{1}{1-\theta}} + 4 C''_2\right\} \log \frac{2}{\delta} n^{-\frac{2}{4-2\theta+ \zeta\theta}} $$
and
$$b_{n, \delta} :=\left\{4 C'_1 + 4 C_r\right\} \log \frac{2}{\delta}
n^{-\alpha'}$$ with
$$\alpha' := \min\left\{\frac{1}{2} +
\beta\left(\frac{\theta (1+r)}{4} - \frac{1-r}{2}\right), \
r\beta\right\}. $$ Thus we have
$$\|f_{L, {\mathbb D}_n,
\lambda}\|_H \leq \max\left\{n^{\frac{\beta}{2}} \sqrt{a_{n,
\delta}} \sqrt{R^{[\ell]}}, n^{\frac{\beta}{2}} \sqrt{b_{n,
\delta}}\right\}, \quad \forall {\bf z} \in {\mathcal
W}(R^{[\ell]})\setminus {\mathcal V}_{R^{[\ell]}}. $$ Hence
\begin{equation}\label{iterR}
{\mathcal W}(R^{[\ell]}) \subseteq {\mathcal W}(R^{[\ell +1]})\cup
{\mathcal V}_{R^{[\ell]}},
\end{equation}
after we define the sequence of radii $\{R^{[\ell]} \geq 1\}_{\ell
\in \NN}$ by
\begin{equation}\label{sequenceR}
R^{[\ell +1]} = \max\left\{n^{\frac{\beta}{2}} \sqrt{a_{n,
\delta}} \sqrt{R^{[\ell]}}, n^{\frac{\beta}{2}} \sqrt{b_{n,
\delta}}, 1\right\}.
\end{equation}
For any positive integer $J \in \NN$, we have
$${\mathcal Z}^m = {\mathcal W}(R^{[0]}) \subseteq {\mathcal W}(R^{ })\cup
{\mathcal V}_{R^{[0]}} \subseteq \ldots \subseteq {\mathcal
W}(R^{[J]})\cup \left(\cup_{\ell =0}^{J-1} {\mathcal
V}_{R^{[\ell]}}\right), $$ which tells us that the set ${\mathcal
W}(R^{[J]})$ has measure at least $1- J \delta$. We also see
iteratively from the definition (\ref{sequenceR}) that
\begin{eqnarray*} R^{[J]} &\leq& \max\left\{n^{\frac{\beta}{2}}
\sqrt{a_{n, \delta}} \sqrt{R^{[J-1]}}, \ n^{\frac{\beta}{2}}
\sqrt{b_{n, \delta}}, \ 1\right\}
\leq \ldots \\
&\leq& \max\Bigl\{\left(n^{\frac{\beta}{2}} \sqrt{a_{n,
\delta}}\right)^{1 + \frac{1}{2} + \ldots + \frac{1}{2^{J-1}}}
\left(R^{[0]}\right)^{\frac{1}{2^J}}, \ n^{\frac{\beta}{2}}
\sqrt{b_{n, \delta}}, \ 1, \ \ldots, \ \\
&& \left(n^{\frac{\beta}{2}} \sqrt{a_{n, \delta}}\right)^{1 +
\frac{1}{2} + \ldots + \frac{1}{2^{J-1}}}
\left(\max\left\{n^{\frac{\beta}{2}} \sqrt{b_{n,
\delta}}, 1\right\}\right)^{\frac{1}{2^{J-1}}} \Bigr\}\\
&\leq& \left\{4 \left(2^{\theta} C'_2\right)^{\frac{1}{1-\theta}}
+ 4 C''_2 + 4 C'_1 + 4 C_r +1\right\}\max\{\sqrt{|L|_0}, 1\} \log
\frac{2}{\delta} n^{\alpha''},
\end{eqnarray*}
where
\begin{eqnarray*} \alpha'' &=& \max\Biggl\{\left(2 - \frac{1}{2^{J-1}}\right) \left(\frac{\beta}{2}-\frac{1}{4-2\theta+ \zeta\theta}\right) + \frac{\beta}{2^{J+1}}, \ \frac{\beta}{2} - \frac{\alpha'}{2}, \\
&& \qquad \left(\frac{\beta}{2}-\frac{1}{4-2\theta+
\zeta\theta}\right) \frac{1}{2} \left(\frac{\beta}{2} -
\frac{\alpha'}{2}\right), \ \ldots, \ \\
&& \qquad \left(2 - \frac{1}{2^{J-1}}\right)
\left(\frac{\beta}{2}-\frac{1}{4-2\theta+ \zeta\theta}\right) +
\frac{1}{2^{J-1}} \left(\frac{\beta}{2} -
\frac{\alpha'}{2}\right)\Biggr\} \\
&\leq& \max\left\{\beta -\frac{2}{4-2\theta+ \zeta\theta}, \
\frac{\beta}{2} - \frac{\alpha'}{2}\right\} +\frac{1}{2^{J}} \\
&=& \max\left\{\beta -\frac{2}{4-2\theta+ \zeta\theta}, \
\frac{(1-r) \beta}{2}, \  \frac{(1-r) \beta}{2} +
\frac{\beta(1+r)(1- \frac{\theta}{2}) -1}{4}\right\}
+\frac{1}{2^{J}}.
\end{eqnarray*}

Denote
$$ \alpha''' =\max\left\{\beta -\frac{2}{4-2\theta+ \zeta\theta}, \
\frac{(1-r) \beta}{2}, \  \frac{(1-r) \beta}{2} +
\frac{\beta(1+r)(1- \frac{\theta}{2}) -1}{4}\right\} $$ and the
constant
$$ C_3 =\left\{4 \left(2^{\theta} C'_2\right)^{\frac{1}{1-\theta}}
+ 4 C''_2 + 4 C'_1 + 4 C_r\right\}\max\{\sqrt{|L|_0}, 1\}. $$
Choose $J$ to be the smallest positive integer greater than or
equal to $\log_2 \frac{1}{\epsilon}$. Then $\frac{1}{2^{J}} \leq
\epsilon$ and
$$R^{[J]} \leq C_3 \log
\frac{2}{\delta} n^{\alpha''' + \epsilon}.
$$
Applying (\ref{iterGeneral}) to $\ell =J$, we know that for every
${\bf z} \in {\mathcal W}(R^{[J]})\setminus {\mathcal
V}_{R^{[J]}}$, there holds
\begin{eqnarray*} && {\mathcal R}_{L^*, P}
(f_{L, {\mathbb D}_n, \lambda}) - {\mathcal R}^*_{L^*, P,
{\mathcal F}} \leq \max\left\{a_{n, \delta} R^{[J]}, b_{n,
\delta}\right\} \\
&&\leq \left(C_3 \left\{4 \left(2^{\theta}
C'_2\right)^{\frac{1}{1-\theta}} + 4 C''_2\right\} + \left\{4 C'_1
+ 4 C_r\right\}\right) \left(\log \frac{2}{\delta}\right)^2
n^{\epsilon-\min\left\{\frac{2}{4-2\theta+ \zeta\theta} -
\alpha''', \ \alpha'\right\}}. \end{eqnarray*} Since the set
${\mathcal W}(R^{[J]})$ has measure at least $1-J \delta$ while
the set ${\mathcal V}_{R^{[J]}}$ has measure at most $\delta$, we
know that with confidence at least $1- (J +1) \delta$,
$${\mathcal R}_{L^*, P}
(f_{L, {\mathbb D}_n, \lambda}) - {\mathcal R}^*_{L^*, P,
{\mathcal F}} \leq \widetilde{C} \left(\log
\frac{2}{\delta}\right)^2 m^{\epsilon- \alpha}, $$ where
$$ \alpha = \min\left\{\frac{2}{4-2\theta+ \zeta\theta} -
\alpha''', \ \alpha'\right\}$$ and $$ \widetilde{C}= \left(C_3
\left\{4 \left(2^{\theta} C'_2\right)^{\frac{1}{1-\theta}} + 4
C''_2\right\} + \left\{4 C'_1 + 4 C_r\right\}\right). $$ Scaling
$J \delta$ to $\delta$, and expressing $\alpha$ explicitly, we see
that the conclusion of Theorem \ref{mainratesThm} holds true. \qed

It only remains to prove Theorem \ref{quantileThm}. We will do so by showing that Theorem \ref{quantileThm} is a special case
of Theorem \ref{mainratesThm}.

\proof[Proof of Theorem \ref{quantileThm}]  Since $P$ has a $\tau$-quantile of $p$-average type $2$ for some $p\in (0, \infty]$, we know from
\cite{SC1} that Assumption \ref{assumption3} holds true with $\theta = \frac{p}{p+1}$. Since ${\mathcal X}_j \subset \RR^{d_j}$ and $k_j \in C^\infty ({\mathcal X}_j \times {\mathcal X}_j)$, we know from
\cite{Zhoucap} that
Assumption \ref{assumption2} holds true for an arbitrarily small $\zeta>0$. By inserting $r=\frac{1}{2}$, $\beta = \frac{4(p+1)}{3(p+2)}$ and $\theta = \frac{p}{p+1}$ into the expression of $\alpha$ in Theorem \ref{mainratesThm} and choosing $\zeta$ to be sufficiently small, we know that the conclusion of Theorem \ref{quantileThm} follows from that of Theorem \ref{mainratesThm}.
\qed

\addcontentsline{toc}{Chapter}{Bibliography}

\vspace{1cm}
\noindent \textbf{Addresses:}\\[1em]
\begin{tabular}{p{8cm}p{8cm}}
Andreas Christmann & Ding-Xuan Zhou \\
University of Bayreuth & City University of Hong Kong\\
Department of Mathematics & Department of Mathematics\\
Universitaetsstr. 30 &  Y6524 (Yellow Zone), 6/F Academic 1\\
D-95447 Bayreuth   & Tat Chee Avenue\\
Germany   & Kowloon Tong \\
 & Hong Kong\\
 & China
\end{tabular}

\end{document}